\newcommand{\method}{STAIR\xspace}
\newcommand{\methodfull}{\textbf{ST}age-\textbf{A}l\textbf{I}gned \textbf{R}eward learning (\method)\xspace}
\newcommand{\stages}{|\Omega|}
\newcommand{\stagea}{|\Upsilon|}
\newcommand{\stageai}{|\Upsilon_i|}
\newtheorem{lemma}{Lemma}
\title{STAIR: Addressing Stage Misalignment through Temporal-Aligned Preference Reinforcement Learning}
\author{
  Yao Luan\footnotemark[1]~~$^1$, Ni Mu\footnotemark[1]~~$^{1}$,
  Yiqin Yang\footnotemark[2]~~$^2$, Bo Xu$^2$,  Qing-Shan Jia\footnotemark[2]~~$^{1}$\\
  $^{1}$ Beijing Key Laboratory of Embodied Intelligence Systems, \\ Department of Automation, Tsinghua University, Beijing, China \\
  $^{2}$ The Key Laboratory of Cognition and Decision Intelligence for Complex Systems,\\ Institute of Automation, Chinese Academy of Sciences, Beijing, China \\
  \texttt{\{luany23,mn23\}@mails.tsinghua.edu.cn} \\  \texttt{yiqin.yang@ia.ac.cn,} \texttt{jiaqs@tsinghua.edu.cn}  
}
\begin{document}

\maketitle

\begingroup
\renewcommand{\thefootnote}{\fnsymbol{footnote}}

\footnotetext[1]{Yao Luan and Ni Mu contributed equally.}
\footnotetext[2]{Correspondence to Yiqin Yang and Qing-Shan Jia.}
\endgroup

\begin{abstract}

Preference-based reinforcement learning (PbRL) bypasses complex reward engineering by learning rewards directly from human preferences, enabling better alignment with human intentions. %
However, its effectiveness in multi-stage tasks, where agents sequentially perform sub-tasks (e.g., navigation, grasping), is limited by \textbf{stage misalignment}: 
Comparing segments from mismatched stages, such as movement versus manipulation, results in uninformative feedback, thus hindering policy learning. 
In this paper, we validate the stage misalignment issue through theoretical analysis and empirical experiments. 
To address this issue, we propose \methodfull, which first learns a stage approximation based on temporal distance, then prioritizes comparisons within the same stage. 
Temporal distance is learned via contrastive learning, which groups temporally close states into coherent stages, without predefined task knowledge, and adapts dynamically to policy changes. 
Extensive experiments demonstrate \method's superiority in multi-stage tasks and competitive performance in single-stage tasks.
Furthermore, human studies show that stages approximated by \method are consistent with human cognition, confirming its effectiveness in mitigating stage misalignment.
Code is available at \url{https://github.com/iiiiii11/STAIR}.

\end{abstract}

\section{Introduction}

Reinforcement Learning (RL) has achieved significant progress in various applications, including robotics \cite{kaufmann2023champion, chen2022towards}, gaming \cite{silver2016mastering, perolat2022mastering, mnih2013playing}, and autonomous systems \cite{bellemare2020autonomous, degrave2022magnetic, luan2025efficient}. 
Yet, the effectiveness of RL relies heavily on well-designed reward functions, which often require substantial manual effort and expert knowledge. 
To address this challenge, preference-based reinforcement learning (PbRL) has emerged as a promising alternative. 
This approach leverages human preferences among different agent behaviors as the reward signal, thereby alleviating the need for complex reward design.

However, many real-world sequential tasks exhibit multi-stage structures \cite{wang2024multistage, gilwoo2015hierarchical}, a factor often overlooked in existing research. 
For example, in a robotic ``go-fetch-return'' task, where an agent retrieves an object from a distance, the agent must \ding{172} navigate to the object, \ding{173} grasp it, and \ding{174} transport it to a target location, as illustrated in Figure \ref{fig:framework}. 
Current PbRL methods \cite{lee2021pebble, park2022surf} face challenges of \textbf{stage misalignment} when collecting human preferences for these multi-stage tasks. 
This issue arises when behaviors from different stages, such as navigation and grasping, are presented to humans for comparison. 
It leads to ambiguous feedback, as labelers struggle to compare behaviors in distinct subtasks, like efficient movement versus precise manipulation. 
This ambiguity can be clarified by event segmentation theory from cognitive science \cite{zacks2007event, kurby2008segmentation}. The theory suggests that humans process action sequences by dividing them into discrete event boundaries. Consequently, comparisons involving behaviors that cross these natural event boundaries (from different stages) impose a higher cognitive load on labelers, thereby increasing ambiguity in their assessments. 
Moreover, significant differences in state distributions across stages can reduce the information gained from stage-misaligned queries, adversely impacting policy learning. 

In this paper, we systematically analyze the impact of stage misalignment both theoretically and empirically. 
As illustrated in Proposition \ref{prop:n-query}, comparing behaviors across different stages leads to inefficient policy learning, which requires significantly more feedback. 
Additionally, Proposition \ref{prop:n-query-bias} and the experiments in Figure \ref{fig:toy-model-exp} reveal that when humans have inherent preferences for certain stages (e.g., favoring stages closer to task completion), conventional methods show quadratic growth in feedback demands, while the stage-aligned approach scales linearly. 
These findings underscore the importance of stage alignment for efficient preference learning in multi-stage tasks.

To address the stage misalignment issue, we focus on selecting queries with aligned stages for comparison, where a key challenge is identifying these stages without prior task knowledge. 
We propose a novel method, \method, which leverages temporal distance to measure stage differences. 
Temporal distance is learned through contrastive learning, grouping closely occurring states together, while separating those with significant temporal gaps. 
As illustrated in Figure \ref{fig:framework}, \method involves two main phases: 
1) utilize contrastive learning to develop a temporal distance model for measuring stage differences; and 
2) prioritize the comparison of segments with small stage differences. 
Extensive experiments show that \method outperforms state-of-the-art PbRL methods, achieving higher success rates, improved feedback efficiency, and faster convergence. 
Moreover, human studies validate that queries selected by \method align with human cognition, confirming its effectiveness in addressing stage misalignment.

In summary, our contributions are threefold: 
(1) We identify the critical issue of stage misalignment in PbRL through theoretical analysis and human experiments. 
(2) We propose \method, a novel stage-aligned learning method that automatically approximates stage differences via temporal distance, and selects stage-aligned queries to address the stage misalignment issue. 
(3) Extensive experiments show that \method outperforms existing methods, validating our key insight that stage alignment significantly improves preference learning efficiency in multi-stage scenarios.

\begin{figure}
    \centering
    \includegraphics[width=1.0\linewidth]{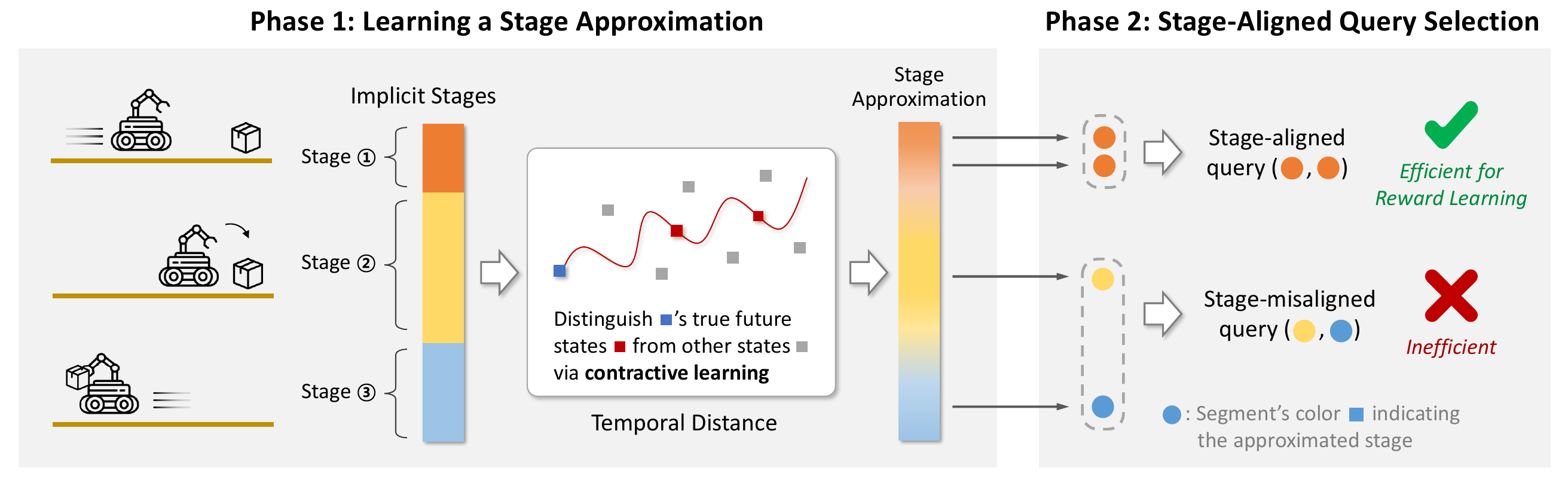}
    \caption{
    An overview of the proposed \method. 
    (1) Learn the temporal distance to construct an on-policy stage difference approximator.
    (2) Use the temporal distance to select stage-aligned queries.
    }
    \label{fig:framework}
\end{figure}

\section{Preliminaries}
\label{sec:preliminaries}

\paragraph{Reinforcement learning. }

An RL problem typically builds on a Markov Decision Problem (MDP) characterized by a tuple $(\mathcal{S}, \mathcal{A}, P, r, \mu_0, \gamma)$, where 
$\mathcal{S}$ and $\mathcal{A}$ are the state space and the action space, $P:\mathcal{S}\times\mathcal{A}\rightarrow\Delta({\mathcal{S}})$ is the state transition, $r:\mathcal{S}\times\mathcal{A}\rightarrow\mathbb{R}$ is the reward function,
$\mu_0$ is the initial state distribution, 
and $\gamma\in(0,1)$ is the discount factor.
A policy $\pi: \mathcal{S}\rightarrow\Delta{(\mathcal{A})}$ interacts with the environment by sampling action $a$ from the distribution $\pi(a|s)$ when observing state $s$.
The goal of RL agent is to learn a policy $\pi$, which maximizes the expectation of a discounted cumulative reward: $\mathcal{J}(\pi)=\mathbb{E}_{\mu_0,\pi}\left[\sum_{t=0}^{\infty}\gamma^t r(s_t, a_t)\right]$.

\paragraph{Preference-based reinforcement learning. }
In PbRL, the true reward function is not available and is replaced by an estimated reward function $\hat{r}_\psi$ parameterized by $\psi$, which is trained to be consistent with human preference.
Specifically, given a segment pair $(\sigma_0, \sigma_1)$, where a segment $\sigma=\{s_k,a_k, \dots,s_{k+H-1},a_{k+H-1}\}$ is a state-action sequence with fixed length $H$, the preference $y$ is a binary indicating the index of the preferred segment.
To construct $\hat{r}_\psi$, we follow the the Bradley-Terry model \cite{bradley-terry,christiano2017deep}, and define the preference predictor $P_\psi$ as follows:
\begin{equation}
\label{eq:BT_model}
    P_\psi[\sigma_1\succ\sigma_0] =
    \frac
    {\exp(\sum_{({s}_t^1,{a}_t^1)\in \sigma_1} \hat{r}_\psi({s}_t^1,{a}_t^1))}
    {\sum_{i\in\{0,1\}}\exp (\sum_{({s}_t^i,{a}_t^i)\in \sigma_i} \hat{r}_\psi({s}_t^i,{a}_t^i))},
\end{equation}
where $\sigma_1\succ \sigma_0$ indicates the human prefer $\sigma_1$ than $\sigma_0$.
Then, $\hat r_\psi$ can be trained by minimizing the cross-entropy loss between the true preference $y$ and the preference estimated by $P_\psi$:
\begin{equation}
\label{eq:CE_loss}
    \begin{aligned}
        \mathcal{L}_{\mathrm{reward}}(\psi) =
        -&\mathbb{E}_{(\sigma_0,\sigma_1,y) \sim \mathcal{D}^\sigma}
        \Big[ (1-y) \log P_\psi[\sigma_0\succ\sigma_1]          
               + y \log P_\psi[\sigma_1\succ\sigma_0] \Big],
    \end{aligned}
\end{equation}
where $\mathcal{D}^\sigma$ is the preference buffer storing $(\sigma_0,\sigma_1,y)$ labeled by human. 

\paragraph{Stage Formulation. }

For complex tasks with multi-stage characteristics, such as the ``go-fetch-return'' task in Figure \ref{fig:framework}, we reformalize the problem as a chain of stages as $\omega_1 \rightarrow \omega_2 \rightarrow \dots \rightarrow \omega_{N_\text{stage}}$.
Each stage $\omega$ represents an aggregation of states, and all stages comprise the stage space $\Omega=\{\omega_1,\dots,\omega_{N_\text{stage}}\}$, $N_\text{stage}=\stages$. 
A mapping function $F(s,\omega): \mathcal{S}\times \Omega \rightarrow [0,1]$ assigns the probability of a state $s$ belonging to stage $\omega$. 
Using $F(s,\omega)$, we can sequentially decomposed a trajectory $\tau=(s_0^\tau, a_0^\tau, \dots, s_{T-1}^\tau, a_{T-1}^\tau)$ into $N_\tau$ segments $\{\sigma_i^\tau\}_{i=1}^{N_\tau}$ ($N_\tau\le N_\text{stage}$), where all states within a segment belong to the same stage.
Specifically, let $G^\tau(i)$ denote the index of the stage for state $s_i^\tau$, then, the segmentation of $\tau$ is obtained by solving the following optimization problem:
\begin{equation}
\begin{aligned}
    &\max_{G^\tau(\cdot)} \mathbb{E}_{s_i^\tau\in \tau}[F(s_i^\tau, \omega_{G^\tau(i)})],\\
    \text{s.t.} ~~ & G^\tau(0)=1, ~G^\tau(i)-1\le G^\tau(i-1), ~G^\tau(i-1)\le G^\tau(i), ~i\in\{1,\dots,T-1\}.
\end{aligned}
\label{eq:stage}
\end{equation}
This optimization problem maximizes the likelihood of the stage decomposition, quantified by $F$, while maintaining the sequential chain structure of stage transitions. 
Once $G^\tau(\cdot)$ is solved, segments can be derived as $\sigma_i^\tau=(s_j,a_j)_{G^\tau(j)=i}$, $i=1,\dots,N_\tau$, where $N_\tau=\max_i G^\tau(i)$ is the number of stages in trajectory $\tau$. 
$N_\tau=N_\text{stage}$ holds only if the task is completed within this trajectory.
The optimal objective value of problem \eqref{eq:stage} quantifies the degree to which the trajectory $\tau$ can be divided into stages. 
To evaluate the degree to which an MDP can be divided into stages, we propose using the average optimal objective value of problem \eqref{eq:stage} across trajectories generated by the optimal policy $\pi^*$. 
This measure can be formally expressed as:
$\mathcal{F}\triangleq\mathbb{E}_{\tau\sim\pi^*}[\max_{G^\tau(\cdot)} \mathbb{E}_{s_i^\tau\in \tau}[F(s_i^\tau, \omega_{G^\tau(i)})]]$.

\section{Impact of Stage Misalignment}
\label{sec:toy}

\begin{figure}
    \centering
    \begin{minipage}[b]{0.24\linewidth}
    \centering
        \includegraphics[width=1\linewidth]{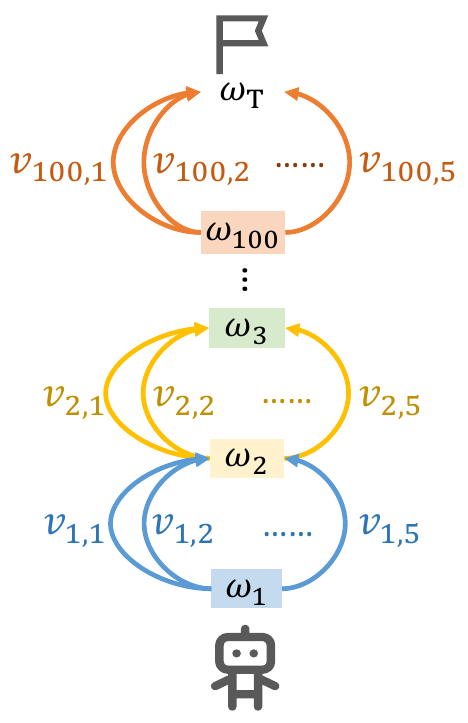} 
    \caption{An illustration of the experiment in the abstract MDP model.
    }
    \label{fig:toy-model-stage}
    \end{minipage}
    \hspace{0.01\linewidth}
    \begin{minipage}[b]{0.73\linewidth}
        \includegraphics[width=0.352\linewidth]{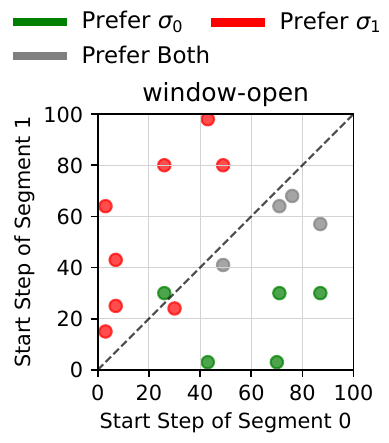}
        \includegraphics[width=0.642\linewidth]{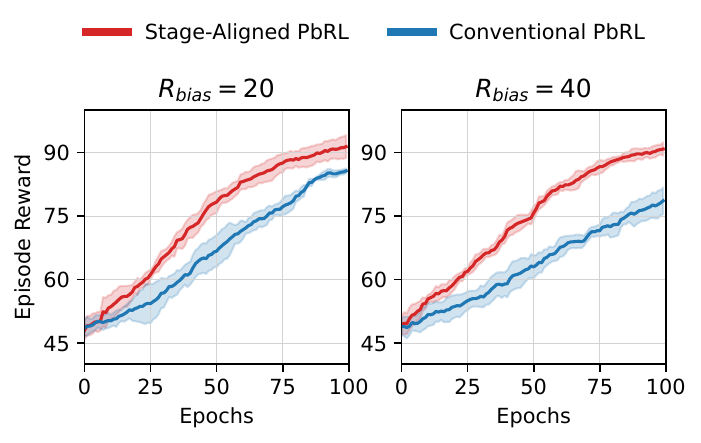}
    \caption{
    \textbf{(Left)} Human preferences of segments started at different timesteps in the window-open task. 
    Each point $(t_x,t_y)$ represents that segment $\sigma_0$ and $\sigma_1$ are collected from steps $t_x$ and $t_y$.
    Humans prefer segments in later timesteps, 
    suggesting a stage reward bias, where humans' underlying reward is higher in these later stages.
    \textbf{(Right)} Normalized episode reward of various $R_\text{bias}$ in the abstract MDP model. %
    Additional results are shown in Appendix \ref{app:extra-exp-toy}.
    }
    \label{fig:toy-model-exp}
    \end{minipage}
\end{figure}

\subsection{Impact of Stage Misalignment in Multi-Stage Tasks}
\label{sec:toy-metaworld}

In this subsection, we first validate the existence of the multi-stage property in real-world tasks, then demonstrate the negative effects of stage misalignment.
To verify the multi-stage property, we analyze the state distributions at each timestep, as greater differences among them suggest a higher potential for problem segmentation into stages.
Formally, we introduce Proposition \ref{prop:timestep-2} (proof in Appendix \ref{app:proof}), which considers a classifier that predicts the collected timestep for a given state, with high accuracy suggesting the task's multi-stage property. 
Note that Proposition \ref{prop:timestep-2} holds independently of $N_\text{stage}$.
\begin{restatable}{proposition}{timestepcalibrated} \label{prop:timestep-2}
     For an MDP and trajectories generated by its optimal policy $\pi^*$, consider a calibrated classifier $\hat{T}(s)$ that takes a state $s$ as input and outputs the probability $p_t(s)$ that the state $s$ is collected at step $t\in\{1,2,\dots N\}$, $\hat{T}(s)=\max_{t}p_t(s)$. Denote the accuracy of the classifier as $acc$, the multi-stage measure $\mathcal{F}$ has a lower bound $\mathcal{F} \ge acc^2$. 
\end{restatable}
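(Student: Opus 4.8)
The plan is to prove the bound by exhibiting a single explicit stage decomposition whose objective value already dominates $acc^2$; since $\mathcal{F}$ is defined through a maximization over the assignment $G^\tau$ (and, implicitly, over the stage structure, which the statement lets us pick freely because the claim is independent of $N_\text{stage}$), any feasible choice yields a lower bound. I would take $N_\text{stage}=N$, assigning one stage per timestep, and set the membership function to the classifier's own output, $F(s,\omega_t)=p_t(s)$. Then the assignment $G^\tau(i)=i+1$, which places the state collected at step $t$ into stage $\omega_t$, satisfies the chain constraints of problem \eqref{eq:stage} (indeed $G^\tau(0)=1$ and $G^\tau(i)-G^\tau(i-1)=1$), so it is feasible. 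Substituting it into the inner objective gives $\mathbb{E}_{s_i^\tau\in\tau}[p_{t_i}(s_i^\tau)]$, the average probability the classifier assigns to the true timestep $t_i$, and taking the expectation over $\tau\sim\pi^*$ turns the per-position uniform average into an expectation over the joint draw $(s,t)$. Hence $\mathcal{F}\ge\mathbb{E}_{(s,t)}[p_t(s)]$.

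It remains to lower-bound $\mathbb{E}_{(s,t)}[p_t(s)]$ by $acc^2$ using calibration. Writing $c(s)=\max_t p_t(s)$ for the confidence and $\hat T(s)=\arg\max_t p_t(s)$ for the prediction, calibration in the form $p_t(s)=\Pr[t\mid s]$ gives two facts: the accuracy equals the expected confidence, $acc=\mathbb{E}_s[\max_t p_t(s)]$ (the prediction is correct on $s$ with probability $p_{\hat T(s)}(s)=c(s)$), and the expected true-class probability equals $\mathbb{E}_s[\sum_t p_t(s)^2]$. I would then chain three elementary inequalities,
\begin{equation*}
acc^2=\big(\mathbb{E}_s[c(s)]\big)^2\le \mathbb{E}_s[c(s)^2]\le \mathbb{E}_s\Big[\sum_t p_t(s)^2\Big]=\mathbb{E}_{(s,t)}[p_t(s)]\le\mathcal{F},
\end{equation*}
where the first step is Jensen's inequality, the second uses $(\max_t p_t(s))^2\le\sum_t p_t(s)^2$, the equality reuses calibration, and the last step is the feasibility argument above. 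This completes the proof.

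The main obstacle is pinning down the calibration assumption precisely enough to license both the identity $acc=\mathbb{E}_s[\max_t p_t(s)]$ and the identity $\mathbb{E}_{(s,t)}[p_t(s)]=\mathbb{E}_s[\sum_t p_t(s)^2]$; both rely on reading $p_t(s)$ as the genuine posterior $\Pr[t\mid s]$, so I would state this (multiclass) form of calibration explicitly at the outset rather than the weaker confidence-calibration notion. A secondary point to handle carefully is the indexing and feasibility of $G^\tau$ when the effective horizon and $N_\text{stage}$ must coincide, and confirming that $\mathbb{E}_{s_i^\tau\in\tau}[\cdot]$ denotes the uniform average over positions, so that the reduction to the joint distribution of $(s,t)$ matches exactly the distribution defining $acc$. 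Everything after these modeling choices is the short Jensen-plus-$\ell_2$-norm computation displayed above.
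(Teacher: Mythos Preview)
Your proof is correct and reaches the bound by a genuinely different route from the paper's.

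The paper factors the argument into two pieces. First, an intermediate Proposition~\ref{prop:timestep} establishes $\mathcal{F}\ge acc\cdot\mathbb{E}_{s}[\max_t p_t(s)]$ for \emph{any} prescribed $N_\text{stage}$: it aggregates the $N$ timesteps into $|\Omega|$ bins, sets $F$ to the aggregated classifier's output, uses the timestep-induced assignment $G^\tau$, and then argues that on the correctly-classified fraction of states this assignment attains the maximum aggregated probability while on the remaining $1-acc$ fraction one simply uses $F\ge 0$. Second, a calibration lemma (Lemma~\ref{lem:classify}) shows that perfect calibration forces $acc=\mathbb{E}_s[\max_t p_t(s)]$, turning the product into $acc^2$. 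Your approach instead commits to $N_\text{stage}=N$, takes the identity assignment $G^\tau(i)=i+1$, and obtains the single-step bound $\mathcal{F}\ge\mathbb{E}_{(s,t)}[p_t(s)]$; calibration rewrites this as $\mathbb{E}_s\sum_t p_t(s)^2$, and Jensen plus $(\max_t p_t)^2\le\sum_t p_t^2$ finishes. Your argument is more elementary and sidesteps the somewhat heuristic ``worst-case probability $1-acc$'' splitting in the paper's Proposition~\ref{prop:timestep} (which tacitly treats the correctness indicator and $\max_t p_t(s)$ as uncorrelated). The trade-off is that the paper's aggregation step is what makes the statement hold ``independently of $N_\text{stage}$'' in the stronger sense that any target number of stages can be accommodated; your construction fixes $N_\text{stage}=N$. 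Both proofs rely on the same strong (multiclass) calibration $p_t(s)=\Pr[t\mid s]$, which the paper also derives explicitly in its Lemma~\ref{lem:classify}, so your caveat about stating this form up front is apt and matches what the paper does.
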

To empirically verify the multi-stage property in real-world scenarios, we analyze MetaWorld robotic manipulation tasks \cite{yu2020metaworld} as an example, since most MetaWorld tasks are multi-stage. For example, the window-open task requires the arm to open the window, which can be divided into two stages: grasping the handle and pulling the handle.
Specifically, we train a classifier as in Proposition \ref{prop:timestep-2} using trajectories of the expert policy.
The classifier achieves an impressive 81\% accuracy, indicating significant differences in state distributions across timesteps, thereby supporting the presence of the multi-stage property in these tasks. 
Experimental details are shown in Appendix \ref{app:toy}.

Then, we theoretically evaluate the impact of stage misalignment on efficiency in multi-stage tasks. 
We reformalize these tasks to an abstract MDP $(\Omega, \{\Upsilon_i\}_{i=1}^{\stages}, \bar{P}, \bar{r}, \bar{\mu}_0, \bar{\gamma})$, where each state $w_i$ represents a stage\footnote{To avoid confusion with the general MDP formulation, we use $\omega$ to denote both the state and the stage for the abstract MDP, and use $\upsilon$ to denote the action.} and has $\stageai$ actions $\{\upsilon_{i,j}\}_{j=1}^{\stageai}$. 
Transitions follow a deterministic chain structure ($\omega_i \rightarrow \omega_{i+1}$), and the reward function $\bar{r}$ perfectly aligns with human preferences.
With this formulation, we compare two query selection methods: 
(1) \textit{stage-aligned selection} samples segments within the same stage, and 
(2) \textit{conventional selection} samples segments across all stages \cite{christiano2017deep, lee2021pebble}. 
As analyzed by Proposition \ref{prop:n-query}, generally, the conventional PbRL requires more queries to learn an optimal policy than the stage-aligned PbRL.
The analysis leverages the similarity between PbRL and ranking methods, where the true ordering is defined by the reward function $\bar{r}(\omega,\upsilon)$. The stage-aligned method learns local orderings for actions in each stage, and the conventional method learns a global ordering of all stage-action pairs. 
Detailed assumptions and the proof are provided in Appendix \ref{app:proof}. 
\begin{restatable}{proposition}{ranknormal} \label{prop:n-query}
     In the worst case scenario, the conventional PbRL needs $\mathcal{O}(\stages\stagea \log(\stages\stagea))$ additional queries to learn the optimal policy compared to the stage-aligned PbRL.
\end{restatable}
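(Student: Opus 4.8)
The plan is to reduce policy learning to a comparison-based sorting problem and then invoke the classical information-theoretic lower bound for sorting. Since the abstract MDP has deterministic chain transitions $\omega_i\to\omega_{i+1}$, every trajectory visits each stage exactly once, so the optimal policy is fully determined by the reward-maximizing action within each stage, i.e.\ by $\upsilon_{i,j^*}$ with $j^*=\arg\max_j \bar r(\omega_i,\upsilon_{i,j})$ for every $i$. Because $\bar r$ aligns perfectly with human preferences, a noiseless preference query between two stage-action pairs returns exactly the sign of their reward difference; each query is therefore precisely one comparison in the comparison-query model, and recovering the optimal policy amounts to resolving enough of the reward ordering to identify every per-stage maximizer.

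Next I would characterize the ordering each method must resolve. The stage-aligned method only compares pairs drawn from a common stage, so it can learn nothing beyond the $\stages$ local orderings, one per stage over its at most $\stagea$ actions; identifying the optimal policy thus reduces to $\stages$ independent max-finding problems, each of size at most $\stagea$. The conventional method instead samples pairs across all stages, and in the worst case its queries impose constraints that span the entire set of at most $\stages\cdot\stagea$ stage-action pairs, so that producing a reward function consistent with all observed comparisons is equivalent to sorting the full global list of these elements.

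I would then apply the decision-tree lower bound for comparison sorting: resolving a total order on $n$ elements requires $\log_2(n!)=\Theta(n\log n)$ comparisons in the worst case, whereas finding the maximum of $n$ elements needs only $n-1$. Summing over stages, the stage-aligned method uses $\sum_{i=1}^{\stages}(\stageai-1)\le\stages(\stagea-1)=\mathcal{O}(\stages\,\stagea)$ queries to extract all maximizers, while the conventional method needs $\Theta(\stages\,\stagea\log(\stages\,\stagea))$ queries for the global sort. Subtracting the two worst-case counts leaves an additional requirement of $\mathcal{O}(\stages\,\stagea\log(\stages\,\stagea))$ queries for the conventional scheme, which is exactly the claimed bound.

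The main obstacle is rigorously justifying the reduction, not the sorting bounds, which are standard. Two points need care. First, translating the probabilistic Bradley--Terry objective in Eq.~\eqref{eq:BT_model} into the discrete comparison model relies on the noiseless-preference assumption, so that each query yields a single deterministic bit rather than a soft label. Second, one must exhibit a worst-case instance in which the conventional sampler genuinely cannot avoid resolving the cross-stage order; otherwise it could shortcut to per-stage max-finding and the gap would collapse. I would construct such an instance by interleaving the reward values of actions across different stages, so that any globally consistent reward function forces a chain of pairwise cross-stage comparisons, and verify that the stage-aligned sampler, being restricted to within-stage queries, provably never pays this extra cost.
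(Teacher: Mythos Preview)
Your approach and the paper's are both reductions to comparison-sorting lower bounds, so the high-level strategy matches. The paper first states Lemma~\ref{lem:sort} (sorting $n$ elements needs $\log_2(n!)=\mathcal{O}(n\log n)$ comparisons) and then computes $c_1$ and $c_2$ for the two schemes, subtracting and simplifying with Stirling's formula.

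The one substantive difference is in how the worst case is modeled. The paper \emph{defines} the worst case as the regime in which ``the optimal solution can only be determined after recovering the total order relation of action $\upsilon$ for each stage $\omega$.'' Under that assumption, stage-aligned PbRL must fully sort each stage, giving $c_1=\stages\log(\stagea!)=\mathcal{O}(\stages\stagea\log\stagea)$, not the $\mathcal{O}(\stages\stagea)$ max-finding cost you use. Conventional PbRL is then charged $c_2=\log((\stages\stagea)!)-\log(\stages!)$, the information needed to resolve the global order up to the freedom that does not affect within-stage orderings, and the paper computes $c_2-c_1$ via Stirling. By contrast, you let stage-aligned get away with max-finding and force conventional to do a full global sort; you then correctly flag that this asymmetry must be justified by an adversarial instance, which you only sketch. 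The paper avoids that justification altogether by baking ``full per-stage orders required'' into the worst-case assumption for both methods. Either route yields the stated $\mathcal{O}(\stages\stagea\log(\stages\stagea))$ gap; yours is slightly more elementary on the arithmetic side but carries the extra burden of the adversarial construction, while the paper's is cleaner once its worst-case premise is granted.
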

\subsection{Impact of Stage Misalignment Under Stage Reward Bias}
\label{sec:toy-stagemdp}

While Section \ref{sec:toy-metaworld} analyzes the impact of stage misalignment in general multi-stage tasks, this subsection focuses on a specific case, where humans prefer certain stages over others. 
Our theoretical and empirical analysis shows that stage misalignment has a more pronounced effect in these scenarios.

Assuming human preferences can be modeled by a reward function, we formalize these multi-stage tasks through a reward decomposition: $r(s,a) = r_{\text{stage}}(s) + r_{\text{sa}}(s,a)$, 
where $r_{\text{stage}}(s)$ represents the average reward of the stage containing state $s$, 
and $r_{\text{sa}}(s,a)$ denotes the reward advantage of the state-action pair within that stage. 
This decomposition reflects that stages differ in importance in human preference judgments. 
We refer to $r_{\text{stage}}(s)$ as the \textit{stage reward bias}, since a higher stage reward bias reflects more preferred stages.
Note that this decomposition is primarily for validating the severity of stage misalignment, and is not necessary for the proposed method (Section \ref{sec:method}).

To validate the existence of stage reward bias in practical tasks, we conduct a human preference experiment. 
We sample segments of 20-timestep lengths from expert trajectories, and collect human preference labels for segment pairs. 
Human labelers watch video renderings of each segment and select the one that is more beneficial for achieving the task objective, as detailed in Appendix \ref{app:toy}. 
Results in Figure \ref{fig:toy-model-exp} (Left) indicate a clear human preference for segments starting later, 
which empirically demonstrates the existence of stage reward bias in human preference, where certain stages receive higher valuation than others. 

Further, we evaluate the benefit of stage-aligned query selection under stage reward bias. 
Specifically, we instantiate the abstract MDP model as in Figure \ref{fig:toy-model-stage}, where $\omega_i, \upsilon_{i,j}$ denote the $i$-th stage and the $j$-th action in stage $\omega_i$. 
In this model, $\stages=101, \stageai=5, i\in\{1, \cdots, 100,\text{T}\}$ ($w_\text{T}$ is a terminal state) and $\bar{r}_\text{stage}(\omega_i) \sim \text{Uniform}[0, R_\text{bias}], \bar{r}_\text{sa}(\omega_i,\upsilon_j) \sim \text{Uniform}[0,10]$, with $R_\text{bias}$ controls the scale of stage reward bias. 
As shown in Figure \ref{fig:toy-model-exp} (Right), stage-aligned selection yields better learning efficiency than conventional methods in this model. 
This advantage may result from stage-misaligned queries providing limited information for action selection in the current stage, especially under significant stage reward bias. 
Experimental details are provided in Appendix \ref{app:toy}.

We also provide a theoretical analysis on query complexity to support these findings. 
Proposition \ref{prop:n-query-bias} analyzes cases with significant stage reward bias, yielding a more precise result than Proposition \ref{prop:n-query}.
Detailed assumptions and proofs are available in Appendix \ref{app:proof}. 
\begin{restatable}{proposition}{rankbias} \label{prop:n-query-bias}
    If the stage reward bias is sufficiently large, such that the reward ordering between $(\omega_i,\upsilon_j)$ and $(\omega_{i'},\upsilon_{j'})$ depends solely on %
    $\omega_i$ and $\omega_{i'}$ for $i\neq i'$, then in the worst case scenario, conventional PbRL requires $\mathcal{O}(\stages^2\stagea\log(\stagea))$ additional queries to learn the optimal policy compared to stage-aligned PbRL.
\end{restatable}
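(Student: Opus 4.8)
The plan is to mirror the sorting-based reduction behind Proposition \ref{prop:n-query}, but to exploit the large-bias hypothesis to show that the conventional sampler spends almost its entire budget on comparisons that are provably useless for recovering the policy. First I would pin down what "learning the optimal policy" demands under the assumption. The hypothesis states that for $i\neq i'$ the ordering of $(\omega_i,\upsilon_j)$ against $(\omega_{i'},\upsilon_{j'})$ is determined by the stage pair $(\omega_i,\omega_{i'})$ alone. Hence a cross-stage comparison returns the same label no matter which actions are substituted, so it conveys only the global stage ordering and nothing about the ranking of actions inside any single stage. Since the optimal action in stage $\omega_i$ depends solely on the within-stage ordering of $\{\bar r(\omega_i,\upsilon_{i,j})\}_j$, the information a PbRL learner must acquire is exactly the $\stages$ local orderings. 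By the comparison-based sorting lower bound (as used in Proposition \ref{prop:n-query}), recovering them in the worst case costs $\sum_i \Theta(\stageai\log\stageai)=\Theta(\stages\stagea\log\stagea)$ \emph{within-stage} comparisons, which is achievable and is therefore the query count of the stage-aligned method, since by construction it only samples same-stage pairs.

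Next I would lower-bound the conventional method, which draws segment pairs without regard to stage. A counting step gives the number of same-stage pairs as $\sum_i\binom{\stageai}{2}$ against a total of $\binom{\stages\stagea}{2}$, so the probability that a random query lands inside one stage is $\Theta(1/\stages)$. By Step 1 every cross-stage query is wasted, so to accumulate the $\Theta(\stages\stagea\log\stagea)$ within-stage comparisons that Step 1 shows are necessary, a coupon-collector / concentration argument forces the conventional learner to issue $\Theta(\stages)\cdot\Theta(\stages\stagea\log\stagea)=\Theta(\stages^2\stagea\log\stagea)$ queries in expectation and with high probability. Subtracting the lower-order stage-aligned count then yields additional queries of $\mathcal{O}(\stages^2\stagea\log\stagea)$, as claimed.

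The main obstacle is making Step 1's "uninformative" claim airtight: I must rule out that the within-stage orderings could be reconstructed indirectly, either by chaining cross-stage comparisons transitively or through the shared reward scale implicit in the Bradley--Terry fit. Under the assumption the cross-stage label is independent of the within-stage action, so any transitive chain collapses to the already-fixed stage ordering and leaks no within-stage information; I would isolate this as a short lemma. A secondary technical point is the concentration step: I would model the conventional sampler as drawing pairs i.i.d.\ (or adversarially in the worst case), then invoke a standard coupon-collector bound to convert the $\Theta(1/\stages)$ hit probability into the $\Theta(\stages)$ multiplicative blow-up, taking care that the target number of useful comparisons grows only polylogarithmically so the high-probability guarantee does not introduce spurious extra factors.
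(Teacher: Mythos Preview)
Your proposal is correct and follows essentially the same route as the paper: cross-stage comparisons are uninformative under the large-bias assumption, a uniformly drawn query is same-stage with probability $\Theta(1/\stages)$, and multiplying by the $\Theta(\stages\stagea\log\stagea)$ same-stage comparisons needed by the stage-aligned method gives the $\mathcal{O}(\stages^2\stagea\log\stagea)$ blow-up. Your treatment is in fact more careful than the paper's---the paper simply multiplies the required count by $\stages$ without a concentration step, and it disposes of your Bradley--Terry leakage worry by the standing ``ideal learning'' assumption that reduces PbRL to a pure comparison-sorting problem---but the skeleton of the argument is identical.
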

In practice, the stage reward bias is often less significant than that assumed in Proposition \ref{prop:n-query-bias}. 
Consequently, the sample complexity for conventional PbRL ranges from $\mathcal{O}(\stages \stagea \log(\stages \stagea))$ and $\mathcal{O}(|\Omega^2| \stagea\log(\stagea))$, which remains considerably higher than that of stage-aligned PbRL.
In summary, in multi-stage tasks, stage-aligned query selection enhances policy learning by eliminating less informative queries, especially when stage-specific bias is significant.

\section{Stage-Aligned Reward Learning}
\label{sec:method}

To address the stage misalignment in PbRL, we focus on selecting stage-aligned queries. 
However, two key challenges arise:
First, stage definitions are often based on subjective expertise of specific tasks, limiting their generalizability across different tasks. 
Second, stage measurement should be adaptable to evolving policies; otherwise, stages assessed with early suboptimal policies may become incompatible with the newer, improved ones.
We elaborate on these challenges in Appendix \ref{app:challenge}.

To tackle these challenges, we propose \methodfull. 
A core innovation of \method is the use of temporal distance to develop an on-policy stage difference approximator. 
First, \method employs contrastive learning to construct a temporal distance model, which efficiently approximates stage differences between states in an on-policy manner, as detailed in Section \ref{subsec:temporal_distance}. 
Then, we design a query selection method in Section \ref{subsec:query_selection} to identify stage-aligned queries. 
The overall framework of \method is illustrated in Figure \ref{fig:framework} and Algorithm \ref{alg:main}.

\subsection{Learning Temporal Distance}
\label{subsec:temporal_distance}

In \method, we utilize temporal distance as a measure of stage differences, as it effectively addresses the aforementioned challenges. 
Specifically, temporal distance quantifies the transition probabilities between states under a given policy. 
Easily reachable state pairs have smaller temporal distances, indicating similar stages, whereas hard-to-reach state pairs show larger distances, suggesting distinct stages. 
This measure does not rely on any predefined task-specific stage definitions, and could adapt dynamically to policy changes, as temporal distance can be learned in an on-policy manner.
Therefore, temporal distance serves as an effective stage approximator.

Let $p_{\gamma}^\pi(s_+ = y | s_0 = x)$ denote the discounted state occupancy measure of state $y$ when starting from $x$ under policy $\pi$. 
It represents the discounted probability of reaching state $y$ from $x$: 
\begin{equation}
\label{eq:discount_measure}
    p_{\gamma}^\pi(s_+ = y | s_0 = x) = (1-\gamma) \sum_{k=0}^{\infty} \gamma^k p^{\pi}(s_k=y|s_0=x),
\end{equation}
where $p^{\pi}(s_k=y|s_0=x)$ is the probability of reaching $y$ from $x$ in exactly $k$ steps under policy $\pi$.
Using the discounted state occupancy measure in \eqref{eq:discount_measure}, temporal distance quantifies the transition probability between states. 
The successor distance \cite{myers2024learning} serves as a start-of-the-art implementation of the temporal distance, which is defined as 
\begin{equation}
\label{eq:temporal_distance}
    d^{\pi}_{\text{SD}}(x, y) = \log \left( {p^{\pi}_\gamma(s_+ = y | s_0 = y)}/{p^{\pi}_\gamma(s_+ = y | s_0 = x)}\right).
\end{equation}
The successor distance is proven to be a quasimetric \cite{myers2024learning}, even in stochastic MDPs, making it a reliable measure of the stage difference between states.

To learn the successor distance $d^{\pi}_{\text{SD}}(x, y)$, we employ contrastive learning \cite{myers2024learning}. 
This method trains an energy function $f_\theta(x,y)$ that assigns higher scores to state pairs $(x,y)$ belonging to the same trajectory, which follows the joint distribution $p_{\gamma}^\pi(s_+ = y | s_0 = x)$; and that assigns lower scores to pairs $(x,y)$ from different trajectories, which are equivalently sampled from the marginal distributions $p_s(x)$ and $p_+(y) \triangleq \int_{s} p_s(x) p_\gamma^\pi(s_f = y | s_0 = x)$ respectively.
The energy function $f_\theta(x,y)$ is optimized using the symmetrized infoNCE loss \cite{oord2018cpc}: 
\begin{equation}
\label{eq:infonce}
    \mathcal{L}_{\theta} = \sum_{i=1}^{B} \left[ \log \frac{\exp(f_\theta(x_i,y_i))}{\sum_{j=1}^B \exp(f_{\theta}(x_i, y_j))} +  \log \frac{\exp(f_\theta(x_i,y_i))}{\sum_{j=1}^B \exp(f_{\theta}(x_j, y_i))} \right].
\end{equation}
As the optimal energy function $f_{\theta^*}$ satisfies $f_{\theta^*}(x,y)=\log\left( \frac{p_{\gamma}^\pi(s_+ = y | s_0 = x)}{C\cdot p_{+}(y)} \right)$ \cite{poole2019variational}, the successor distance could be derived by $d^{\pi}_{\text{SD}}(x, y)=f_{\theta^*}(y,y)-f_{\theta^*}(x,y)$. 
Following prior work \cite{myers2024learning}, we parameterize the score function as $f_{\theta}(x,y) = c_{\theta_\text{c}}(y) - d_{\theta_\text{d}}(x,y)$, where $\theta=(\theta_\text{c},\theta_\text{d})$. 
Then after contrastive learning, we have $d^{\pi}_{\text{SD}}(x, y)=d_{\theta_\text{d}^*}(x,y)$, allowing $d_{\theta_\text{d}^*}(x,y)$ to serve directly as the temporal distance.

\subsection{Stage-Aligned Query Selection}
\label{subsec:query_selection}

In this subsection, we focus on addressing the stage misalignment issue via temporal distance. 
Specifically, we propose a stage-aligned query selection method, which mitigates stage misalignment by filtering out queries identified as misaligned based on temporal distance.

\paragraph{Measuring stage difference between segments. }
The query selection method requires measuring the stage difference between two segments $\sigma_0, \sigma_1$.
However, the temporal distance in Section \ref{subsec:temporal_distance} was originally designed to measure state distances.
Therefore, we propose a quadrilateral distance to adapt the temporal distance in Section \ref{subsec:temporal_distance} for segment distances: 
\begin{equation}
    d_{\text{stage}}(\sigma_0, \sigma_1) = \frac{1}{4} \cdot (d_\text{SD}^\pi(s_0^0, s_0^1) + d_\text{SD}^\pi(s_{H-1}^0, s_{H-1}^1) + d_\text{SD}^\pi(s_0^0, s_{H-1}^1) + d_\text{SD}^\pi(s_{H-1}^0, s_0^1)),
    \label{eq:quadrilateral}
\end{equation}
\begin{wrapfigure}{r}{5cm}
    \centering
    \includegraphics[width=1\linewidth]{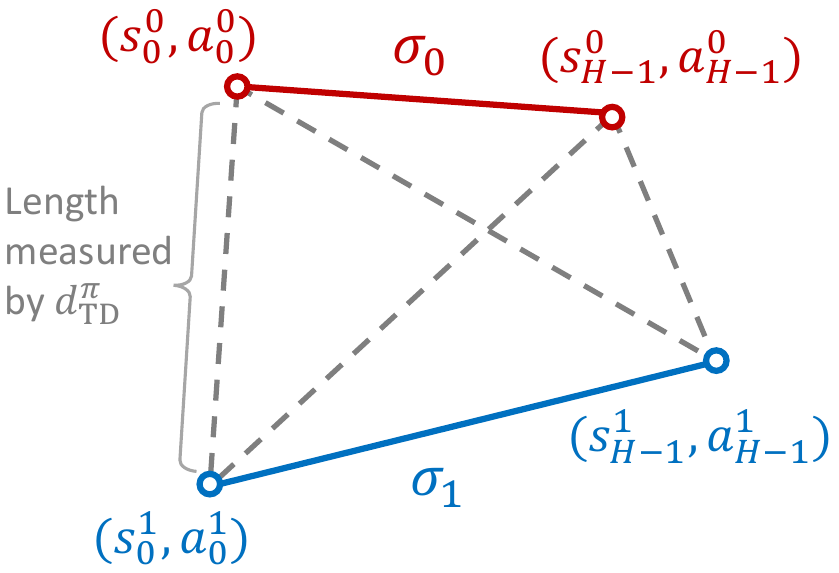}
    \caption{
    An illustration of the quadrilateral distance defined in \eqref{eq:quadrilateral}. The distance $d_{\text{stage}}(\sigma_0, \sigma_1)$ is the average length (measured by temporal distance $d_\text{SD}^\pi$) of the four dashed lines in the quadrilateral. }
    \label{fig:quad}
    \vspace{-1.0em}
\end{wrapfigure}
where $s_0^i$ and $s_{H-1}^i$ denote the initial and final states of segment $\sigma_i$. 
This quadrilateral distance ensures that stage-aligned segments yield smaller $d_{\text{stage}}$ values. 
As visualized in Figure \ref{fig:quad}, the side terms, $d_\text{SD}^\pi(s_0^0, s_0^1) + d_\text{SD}^\pi(s_{H-1}^0, s_{H-1}^1)$, favor segments with closely aligned starting and ending points, indicating straightforward stage alignment. 
The diagonal terms, $d_\text{SD}^\pi(s_0^0, s_{H-1}^1) + d_\text{SD}^\pi(s_{H-1}^0, s_0^1)$, favor segments with shorter temporal spans, thereby focusing on segments concentrated within a single stage. 
We further analyze the behaviour of $d_{\text{stage}}$ theoretically in Appendix \ref{app:theory_quad}.

\paragraph{Query selection. }
Using the stage difference metric, we then propose the stage-aligned query selection method in Algorithm \ref{alg:query}.
This method calculates a score function $I(\sigma_0, \sigma_1)$ for each candidate segment pair $(\sigma_0, \sigma_1)$, and selects queries with the largest scores.
To enhance the informativeness of comparisons, in the score function, we integrate stage alignment with the reward model's uncertainty, a widely adopted metric of informativeness \cite{lee2021pebble, shin2023benchmarks}. 
We represent this uncertainty with $d_{\text{state}}(\sigma_0, \sigma_1)$, which calculates the variance of $P_\psi[\sigma_0 \succ \sigma_1]$ value across ensemble members: $d_{\text{state}}(\sigma_0,\sigma_1)=\text{Var}[P_{\psi_i}[\sigma_0\succ\sigma_1]^{n_\text{e}}_{i=1}]$, following \cite{lee2021pebble}. Here ${P_{\psi_i}[\sigma_0\succ\sigma_1]}_{i=1}^{n_\text{e}}$ denotes an ensemble of $n_\text{e}$ identical reward models with different randomly initalized parameters.
Formally, the selection score is defined as: 
\begin{equation}
    I(\sigma_0,\sigma_1) = (c_\text{stage} - d_{\text{stage}}(\sigma_0, \sigma_1)) (c_\text{state} + d_{\text{state}}(\sigma_0, \sigma_1)), 
\label{eq:query_selection}
\end{equation}
where $d_{\text{stage}}$ and $d_{\text{state}}$ are normalized to $[0,1]$, and $c_\text{stage}, c_\text{state}$ are hyperparameters. 
The first term encourages stage alignment, while the second term emphasizes queries where the reward model shows high uncertainty.  

\begin{figure}[t!]
    \centering
    \includegraphics[width=1.0\linewidth]{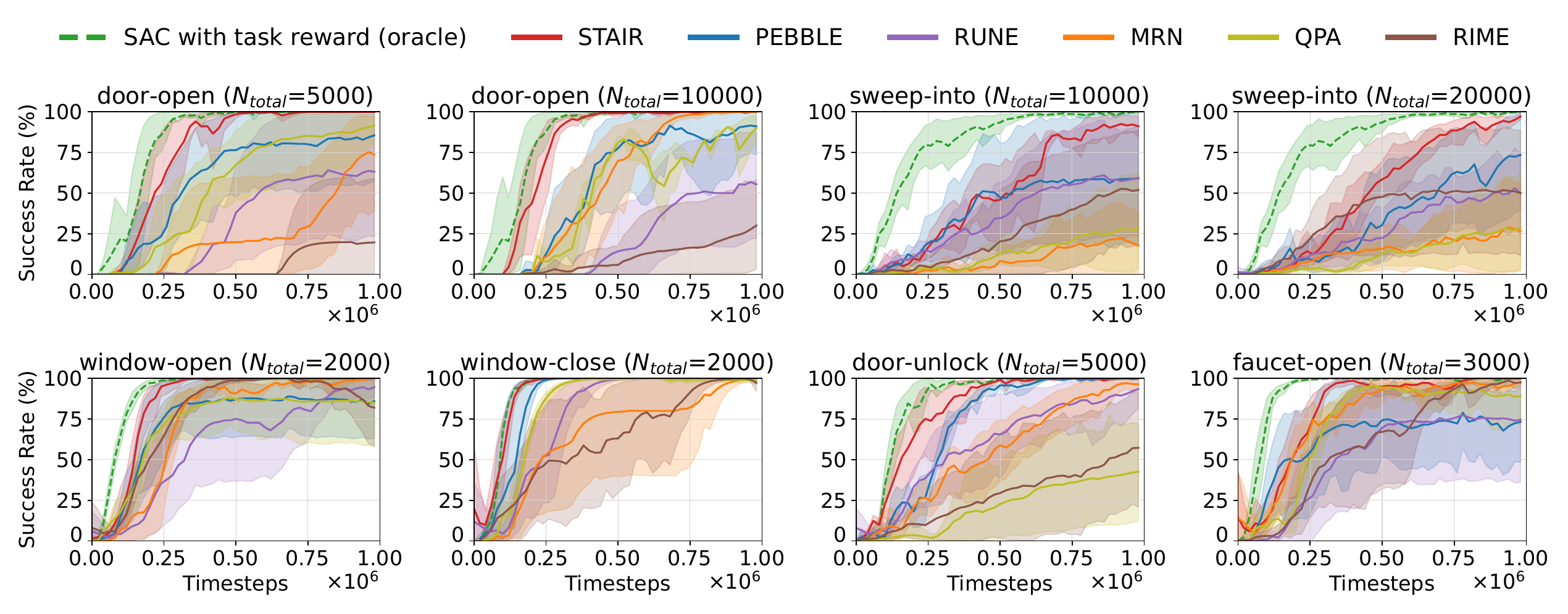}
    \caption{Learning curves on robot manipulation tasks from MetaWorld. The solid line and the shaded area represent the mean and the standard deviation of success rates (\%).
    }
    \label{fig:main-metaworld}
\end{figure}

\begin{figure}[t!]
    \centering
    \includegraphics[width=1.0\linewidth]{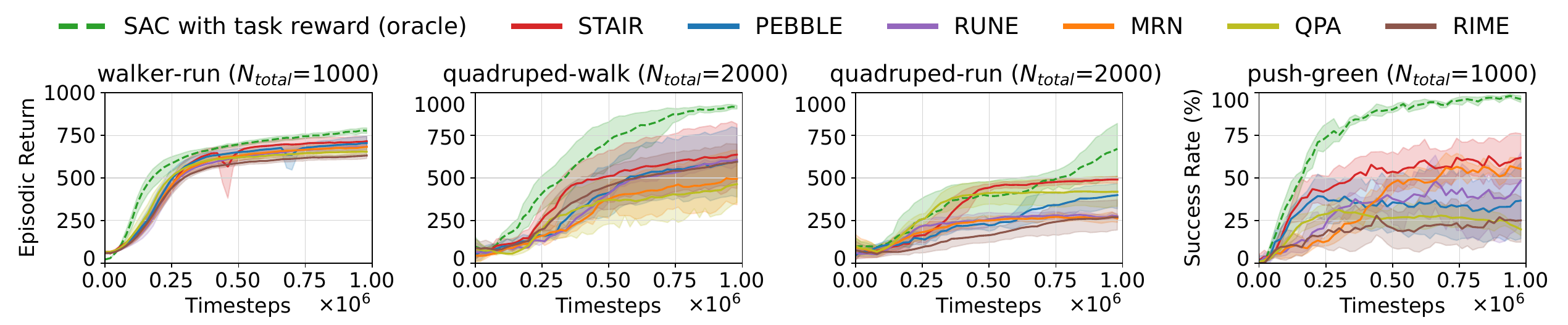}
    \includegraphics[width=1.0\linewidth]{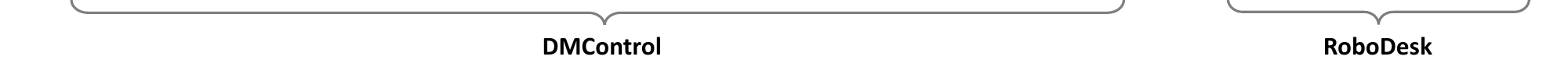}
    \caption{Learning curves on locomotion tasks from DMControl and a robot manipulation task from RoboDesk. The solid line and the shaded area represent the mean and the standard deviation of the episodic returns (DMControl) and success rates (RoboDesk).
    }
    \label{fig:main-dmc}
\end{figure}

\section{Experiments}
\label{sec:exp}

We design our experiments to answer the following questions:
\textit{Q1}: How does \method compare to other state-of-the-art methods in multi-stage tasks?
\textit{Q2}: Is the stage-aligned query selection still beneficial in single-stage tasks?
\textit{Q3}: Does the stage approximated by temporal distance align with human cognition?
\textit{Q4}: What is the contribution of each of the proposed techniques in \method?

\subsection{Setup}

\paragraph{Domains. }
We evaluate \method on several complex robotic manipulation and locomotion tasks from MetaWorld \cite{yu2020metaworld}, DMControl \cite{tassa2018dmcontrol}, RoboDesk \cite{kannan2021robodesk}.
In RoboDesk, we modify the task representation from a pixel-based camera image to a robot-arm state representation, which aligns with MetaWorld. 
Details of these domains are shown in Appendix \ref{app:tasks}.
MetaWorld and RoboDesk focus on multi-stage tasks that achieve specific objectives, such as opening a window, which requires an arm to first grasp the handle and then pull it.
Conversely, DMControl includes single-stage tasks, which focus on maximizing travel distance or velocity, posing challenges in defining stages for humans.
We evaluate \method in both multi-stage and single-stage domains to show its robustness and generalizability.

\paragraph{Baselines and Implementation. }
We compare \method with several state-of-the-art PbRL methods, including PEBBLE \cite{lee2021pebble}, RUNE \cite{liang2022reward}, MRN \cite{liu2022metarewardnet}, RIME \cite{cheng2024rime} and QPA \cite{hu2023query}. 
We also evaluate SAC \cite{sac} with ground truth reward as a performance upper bound. 
For PEBBLE and RUNE, we employ disagreement-based query selection, as it yields the best performance. 
Following prior works \cite{lee2021bpref, lee2021pebble, liang2022reward}, we consider an oracle script teacher that provides preference by comparing the total task reward of two segments. 
Please refer to Appendix \ref{app:implement} for further details.

\subsection{Results on Benchmark Tasks}

\begin{figure}[t]
    \centering
    \begin{minipage}{0.485\textwidth}
        \centering
        \includegraphics[width=\textwidth]{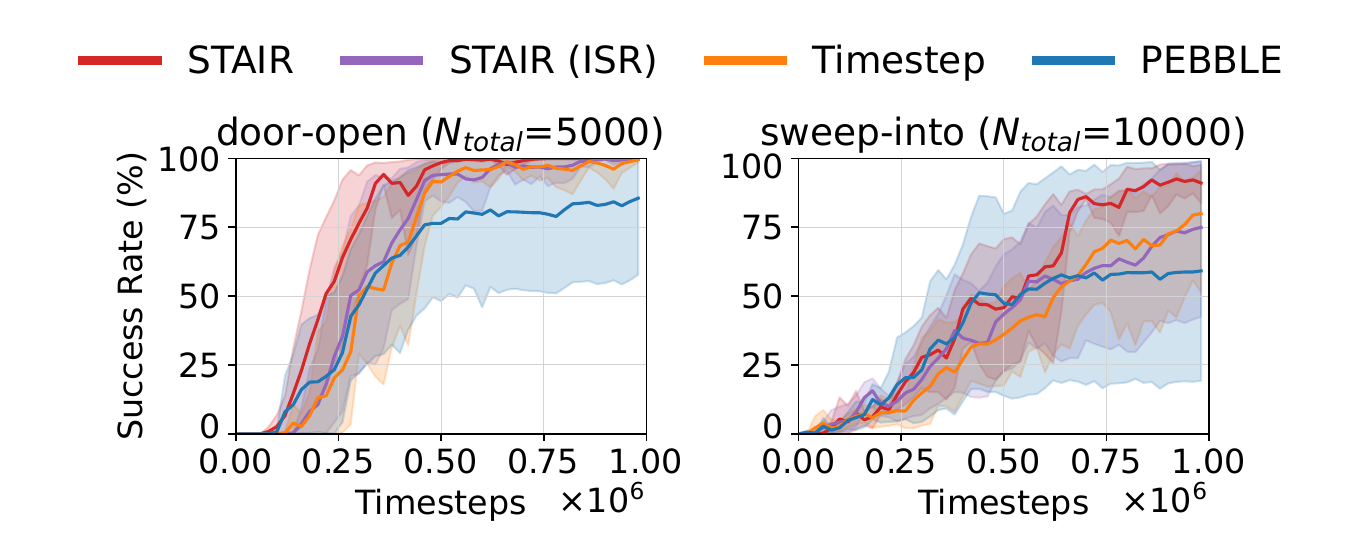}
        \label{subfig:dmetric}
    \end{minipage} \hspace{0pt}
    \begin{minipage}{0.495\textwidth}
        \centering
        \includegraphics[width=\textwidth]{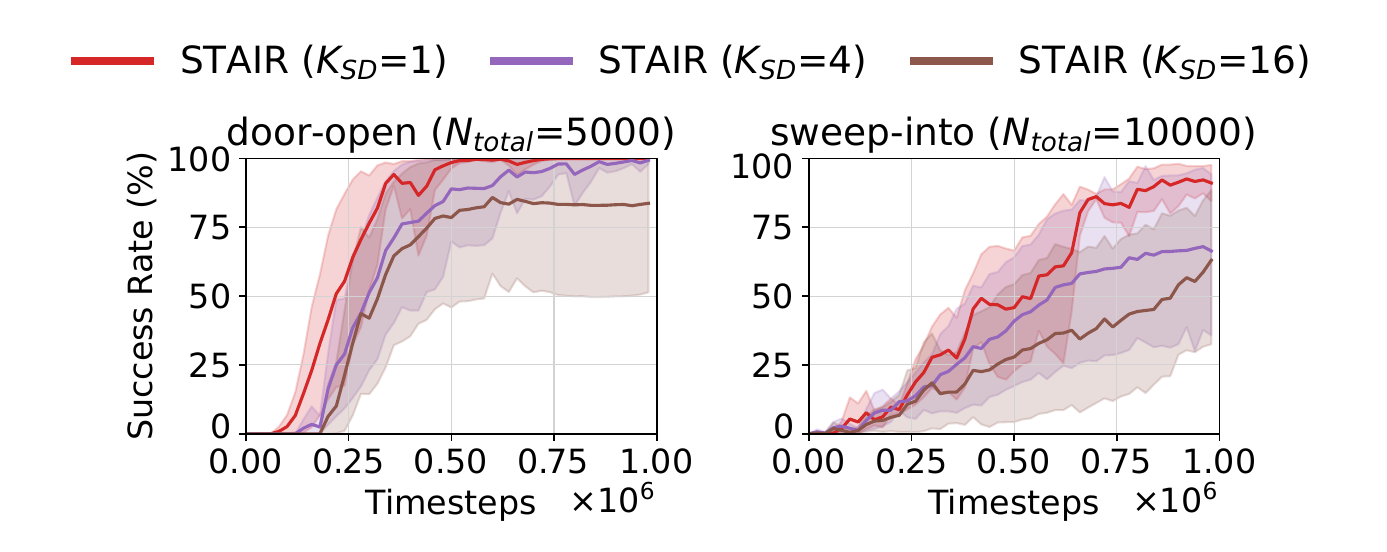}
        \label{subfig:td_update_freq}
    \end{minipage} 
    \vspace{-10pt}
    \caption{Ablation results. 
    \textbf{(Left)} Performance of \method with various stage difference approximators. 
    \textbf{(Right)} Performance of \method under a range of temporal distance update frequencies $K_\text{SD}$.
    }
    \label{fig:ablation_curves}
\end{figure}

\paragraph{Multi-stage tasks in MetaWorld and RoboDesk.}
As shown in Figure~\ref{fig:main-metaworld} (MetaWorld) and \ref{fig:main-dmc} (RoboDesk), \method outperforms baselines across all evaluated multi-stage tasks, achieving success rates close to 100\% in most cases. 
Moreover, \method shows a faster convergence speed. 
For example, in door-open ($N_\text{total}$=5000) and window-open ($N_\text{total}$=2000), \method's performance rapidly improves over fewer time steps, and reaches a stable high performance earlier than the other baselines.

\paragraph{Single-stage tasks in DMControl.}

As shown in Figure~\ref{fig:main-dmc}, \method outperforms PEBBLE and is competitive with other baselines, even in single-stage tasks. 
This suggests the potential of \method for broader applications. 
This success may arise from the implicit curriculum learning induced by \method: 
Though the task is not multi-stage, \method implicitly divides the reward learning process into stages by introducing queries progressively. 
Later learning stages are presented only after the agent masters the earlier ones, enabling the model to focus on the complexities of the newly added stages. 
We provide further discussions on it in Appendix \ref{app:discuss}, and will explore this further in future work. 

\subsection{Human Experiments}
\label{subsec:human}

\begin{wrapfigure}{r}{0.4\linewidth}
    \vspace{-3.5em}
    \includegraphics[width=\linewidth]{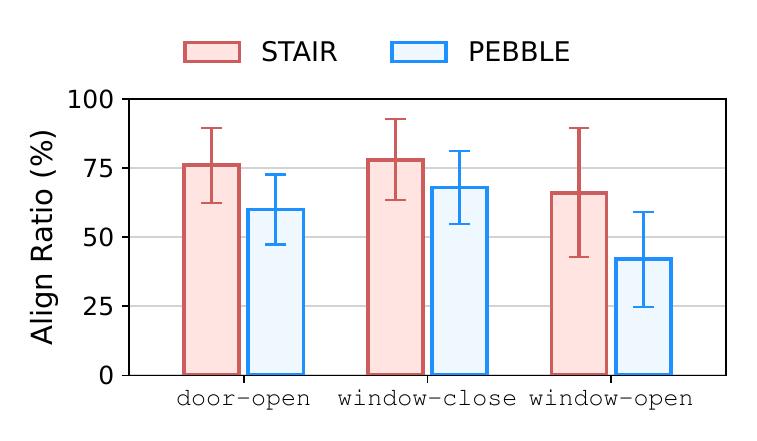}
    \caption{The ratio of two selected segments in a query being at the same stage from a human perspective.}
    \label{fig:human-align}
\end{wrapfigure}

We conducted a human labeling experiment to examine whether the stages approximated by temporal distance align with human cognition. 
Specifically, human labelers are instructed to review queries generated by \method and PEBBLE, and assess whether the two segments in a query correspond to the same stage, as detailed in Appendix \ref{app:human}. 
Figure \ref{fig:human-align} shows the ratio of queries identified by labelers as stage-aligned. 
Additionally, Figure \ref{fig:extra-query-demo} visualizes segment pairs selected by \method and PEBBLE, where the pair selected by \method shows different behaviors in similar stages, and the one of PEBBLE exhibits different stages.
The results indicate that \method effectively selects queries recognized by humans as stage-aligned, thereby facilitating human labeling.

\subsection{Ablation Study}
\label{subsec:ablation}

\paragraph{Impact of the stage difference approximator.}

To show the efficiency of the segment stage difference approximator in \eqref{eq:quadrilateral}, we design two variants of \method: 
(1) \textbf{Timestep}: This variant uses the collected timestep of a state as the stage approximation, then uses \textit{Interval Span Ratio} (ISR) 
$d_\text{ISR}(\sigma_0, \sigma_1) = \frac{\tilde{\omega}_{0,\max} - \tilde{\omega}_{1,\min}}{\tilde{\omega}_{1,\max} - \tilde{\omega}_{0,\min}}$
to assess the stage difference between two segments $\sigma_0, \sigma_1$, where $\tilde{\omega}_{i,\min}$ and $\tilde{\omega}_{i,\max}$ are the minimum and maximum approximated stage for states in $\sigma_i$. It is assumed that $\tilde{\omega}_{0\min} \leq \tilde{\omega}_{1,\min}$, otherwise, we swap $\sigma_0$ and $\sigma_1$. 
The ISR quantifies the alignment between two intervals by measuring their intersection if they overlap, or the gap between them otherwise, thereby serving as an indicator of stage alignment. 
(2) \textbf{\method (ISR)}: This variant uses the temporal distance from the current state to the trajectory's initial state as the stage approximation, which serves as the on-policy version of timestep, and also uses ISR to evaluate the stage difference between two segments.

As shown in Figure \ref{fig:ablation_curves} (Left), \method outperforms all the variants. 
The limitation of these variants may lie in measuring stage differences between segments in a one-dimensional axis. 
In contrast, our quadrilateral distance evaluates stage differences in a two-dimensional space, effectively modeling more complex relationships between segments.

\paragraph{Impact of temporal distance update frequency. }

Accurately estimating stage differences requires the stage approximator to adapt effectively to the evolving policy. 
Figure \ref{fig:ablation_curves} (Right) shows that a lower frequency of temporal distance updates (larger $K_\text{SD}$) leads to a decreased performance of \method. 
This emphasizes the importance of training the stage difference approximator in an on-policy manner.

\paragraph{Robustness on query selection hyperparameters. }
Table \ref{tab:coef_state_stage} shows that \method's performance remains consistent across hyperparameter configurations. 
This shows the robustness of \method to hyperparameter changes, which ensures stable and effective learning in more general settings. 

\paragraph{Enhanced feedback efficiency.}
We compare the performance of \method and PEBBLE using different numbers of queries ($N_\text{total}$) on the MetaWorld door-open task. 
Table \ref{tab:feedback_efficiency} shows that \method consistently outperforms PEBBLE, demonstrating its effectiveness in utilizing limited feedback. 
This result arises from \method's stage-aligned query selection, which offers more informative queries for policy learning, enabling the agent to learn effectively with fewer feedback.

\begin{figure}[t]
\vspace{-10pt}
\begin{minipage}[b]{0.48\linewidth}
\begin{table}[H]
\small
\caption{Performance of \method with various $c_\text{stage}$ and $c_\text{state}$ values. }
\label{tab:coef_state_stage}
\centering
\begin{tabular}{c|c|rr}
\toprule
$c_\text{stage}$ & $c_\text{state}$ & 
\makecell[c]{door-open \\ {\tiny ($N_\text{total}=5000$)} } &
\makecell[c]{sweep-into \\ {\tiny ($N_\text{total}=10000$)} } \\
\midrule
1 & 0.1 & 
99.91 {\tiny $\pm$ 0.07} & 87.81 {\tiny $\pm$ 7.57} \\
1 & 0.5 & 
99.53 {\tiny $\pm$ 0.44} & 92.31 {\tiny $\pm$ 2.28} \\
\midrule
2 & 0.1 & 
100.00 {\tiny $\pm$ 0.00} & 91.08 {\tiny $\pm$ 3.42} \\
2 & 0.5 & 
98.67 {\tiny $\pm$ 1.14} & 93.58 {\tiny $\pm$ 4.82} \\
\bottomrule
\end{tabular}
\end{table}
\end{minipage}
\hfill
\begin{minipage}[b]{0.48\linewidth}
\begin{table}[H]
\centering
\caption{Performance of \method and PEBBLE on door-open using different numbers of query feedback.}
\label{tab:feedback_efficiency}
\begin{tabular}{rrr}
\toprule
$N_\text{total}$ & \method         & PEBBLE \\
\midrule
500   & 52.01 {\tiny $\pm$ 23.18} & 20.00 {\tiny $\pm$ 17.88} \\
2000  & 77.77 {\tiny $\pm$ 11.67} & 28.79 {\tiny $\pm$ 17.02} \\
5000  & 100.00 {\tiny $\pm$ ~ 0.00} & 85.57 {\tiny $\pm$ 12.77} \\
10000 & 99.93 {\tiny $\pm$ ~ 0.06}  & 92.53 {\tiny $\pm$ ~ 6.53}  \\
\bottomrule
\end{tabular}
\end{table}
\end{minipage}
\end{figure}

\section{Related Work}

\paragraph{Preference-based reinforcement learning. }
PbRL has emerged as a promising framework for aligning agent behaviors with human intentions, thereby alleviating the need for complex reward engineering \cite{christiano2017deep, lee2021bpref, clarify, PbMORL}. 
To enhance feedback efficiency, prior works have explored unsupervised pre-training \cite{lee2021pebble}, semi-supervised data augmentation \cite{park2022surf, bai2024efficient}, reward uncertainty-based exploration \cite{liang2022reward}, on-policy query selection \cite{hu2023query}, and meta-learning \cite{hejna2023few}. 
These methods aim to minimize dependence on human feedback by selecting more informative queries based on entropy \cite{ibarz2018reward}, ensemble disagreement \cite{lee2021pebble}, or feature-space diversity \cite{biyik2020active}. 
However, they often overlook the inherent multi-stage nature of real-world tasks \cite{wang2024multistage, gilwoo2015hierarchical}, where comparisons of cross-stage segments can lead to inefficient learning. 
Our approach addresses this by introducing stage-aligned query selection based on temporal distance, which improves learning efficiency by focusing on segment comparisons within the same stage.

\paragraph{Temporal distance learning in RL. }
Temporal distance quantifies the expected transition steps between states under a given policy, which is essential in goal-conditioned RL \cite{eysenbach2022contrastive}, skill discovery \cite{park2024metra}, and state representation learning \cite{sermanet2018time}. 
Prior works learn it through 
spectral decomposition of state transitions \cite{wu2018laplace}, 
constrained optimization for temporal consistency \cite{wang2023optimal}, and 
contrastive learning that groups temporally adjacent states \cite{myers2024learning}. 
Recent research \cite{myers2024learning} introduced successor distance, which is formally guaranteed to be a quasimetric.
The quasimetric characteristic ensures the successor distance to be a reliable state similarity measure, which thus has been employed to design intrinsic rewards that enhance exploration \cite{jiang2025episodic}.
Nevertheless, these methods remain underexplored in multi-stage sequential decision problems. 
Our work fills this gap by employing temporal distance as a stage similarity measure, enabling stage decomposition without task knowledge.

\section{Conclusion}
\label{sec:conclusion}

This paper presents \method, a novel approach for addressing stage misalignment in multi-stage environments. 
\method first constructs a stage difference approximator via temporal distance, learned through contrastive learning. 
Then, \method extends the temporal distance to assess segment distances via quadrilateral distance, enabling the selection of stage-aligned queries. 
Experiments show that \method outperforms baselines in multi-stage tasks and remains competitive in single-stage domains. 
Human experiments further confirm the effectiveness of the learned stage approximation. 

\paragraph{Limitations.}
One limitation of \method is that the quadrilateral distance only assesses pairwise segment differences, limiting its applicability to other preference formats. 
We will explore this in future work. 

\begin{ack}
This work is supported by NSFC (No. 62125304), the National Key Research and Development Program of China (2022YFA1004600), the 111 International Collaboration Project (BP2018006), the Beijing Natural Science Foundation (L233005), and BNRist project (BNR2024TD03003).
\end{ack}

\bibliographystyle{plain}
\bibliography{ref/refs_ours, ref/refs_others, ref/refs_pbrl, ref/refs_td, ref/refs_stage}

\begin{thebibliography}{10}

\bibitem{bai2024efficient}
Fengshuo Bai, Rui Zhao, Hongming Zhang, Sijia Cui, Ying Wen, Yaodong Yang,
  Bo~Xu, and Lei Han.
\newblock Efficient preference-based reinforcement learning via aligned
  experience estimation.
\newblock {\em arXiv preprint arXiv:2405.18688}, 2024.

\bibitem{bellemare2020autonomous}
Marc~G Bellemare, Salvatore Candido, Pablo~Samuel Castro, Jun Gong, Marlos~C
  Machado, Subhodeep Moitra, Sameera~S Ponda, and Ziyu Wang.
\newblock Autonomous navigation of stratospheric balloons using reinforcement
  learning.
\newblock {\em Nature}, 588(7836):77--82, 2020.

\bibitem{biyik2020active}
Erdem Biyik, Nicolas Huynh, Mykel Kochenderfer, and Dorsa Sadigh.
\newblock Active preference-based gaussian process regression for reward
  learning.
\newblock In {\em Robotics: Science and Systems}, 2020.

\bibitem{bradley-terry}
Ralph~Allan Bradley and Milton~E Terry.
\newblock Rank analysis of incomplete block designs: I. the method of paired
  comparisons.
\newblock {\em Biometrika}, 39(3/4):324--345, 1952.

\bibitem{chen2022towards}
Yuanpei Chen, Tianhao Wu, Shengjie Wang, Xidong Feng, Jiechuan Jiang, Zongqing
  Lu, Stephen McAleer, Hao Dong, Song-Chun Zhu, and Yaodong Yang.
\newblock Towards human-level bimanual dexterous manipulation with
  reinforcement learning.
\newblock {\em Advances in Neural Information Processing Systems},
  35:5150--5163, 2022.

\bibitem{cheng2024rime}
Jie Cheng, Gang Xiong, Xingyuan Dai, Qinghai Miao, Yisheng Lv, and Fei-Yue
  Wang.
\newblock Rime: Robust preference-based reinforcement learning with noisy
  preferences.
\newblock {\em arXiv preprint arXiv:2402.17257}, 2024.

\bibitem{christiano2017deep}
Paul~F Christiano, Jan Leike, Tom Brown, Miljan Martic, Shane Legg, and Dario
  Amodei.
\newblock Deep reinforcement learning from human preferences.
\newblock {\em Advances in neural information processing systems}, 30, 2017.

\bibitem{degrave2022magnetic}
Jonas Degrave, Federico Felici, Jonas Buchli, Michael Neunert, Brendan Tracey,
  Francesco Carpanese, Timo Ewalds, Roland Hafner, Abbas Abdolmaleki, Diego
  de~Las~Casas, et~al.
\newblock Magnetic control of tokamak plasmas through deep reinforcement
  learning.
\newblock {\em Nature}, 602(7897):414--419, 2022.

\bibitem{eysenbach2022contrastive}
Benjamin Eysenbach, Tianjun Zhang, Sergey Levine, and Ruslan Salakhutdinov.
\newblock Contrastive learning as goal-conditioned reinforcement learning.
\newblock In Alice~H. Oh, Alekh Agarwal, Danielle Belgrave, and Kyunghyun Cho,
  editors, {\em Advances in Neural Information Processing Systems}, 2022.

\bibitem{florensa2018automatic}
Carlos Florensa, David Held, Xinyang Geng, and Pieter Abbeel.
\newblock Automatic goal generation for reinforcement learning agents.
\newblock In {\em International conference on machine learning}, pages
  1515--1528. PMLR, 2018.

\bibitem{sac}
Tuomas Haarnoja, Aurick Zhou, Pieter Abbeel, and Sergey Levine.
\newblock Soft actor-critic: Off-policy maximum entropy deep reinforcement
  learning with a stochastic actor.
\newblock In Jennifer Dy and Andreas Krause, editors, {\em Proceedings of the
  35th International Conference on Machine Learning}, volume~80 of {\em
  Proceedings of Machine Learning Research}, pages 1861--1870. PMLR, 10--15 Jul
  2018.

\bibitem{hejna2023few}
Donald~Joseph Hejna~III and Dorsa Sadigh.
\newblock Few-shot preference learning for human-in-the-loop rl.
\newblock In {\em Conference on Robot Learning}, pages 2014--2025. PMLR, 2023.

\bibitem{hu2023query}
Xiao Hu, Jianxiong Li, Xianyuan Zhan, Qing-Shan Jia, and Ya-Qin Zhang.
\newblock Query-policy misalignment in preference-based reinforcement learning.
\newblock {\em arXiv preprint arXiv:2305.17400}, 2023.

\bibitem{ibarz2018reward}
Borja Ibarz, Jan Leike, Tobias Pohlen, Geoffrey Irving, Shane Legg, and Dario
  Amodei.
\newblock Reward learning from human preferences and demonstrations in atari.
\newblock {\em Advances in neural information processing systems}, 31, 2018.

\bibitem{jiang2025episodic}
Yuhua Jiang, Qihan Liu, Yiqin Yang, Xiaoteng Ma, Dianyu Zhong, Hao Hu, Jun
  Yang, Bin Liang, Bo~XU, Chongjie Zhang, and Qianchuan Zhao.
\newblock Episodic novelty through temporal distance.
\newblock In {\em The Thirteenth International Conference on Learning
  Representations}, 2025.

\bibitem{kannan2021robodesk}
Harini Kannan, Danijar Hafner, Chelsea Finn, and Dumitru Erhan.
\newblock Robodesk: A multi-task reinforcement learning benchmark.
\newblock \url{https://github.com/google-research/robodesk}, 2021.

\bibitem{kaufmann2023champion}
Elia Kaufmann, Leonard Bauersfeld, Antonio Loquercio, Matthias M{\"u}ller,
  Vladlen Koltun, and Davide Scaramuzza.
\newblock Champion-level drone racing using deep reinforcement learning.
\newblock {\em Nature}, 620(7976):982--987, 2023.

\bibitem{kim2023preference}
Changyeon Kim, Jongjin Park, Jinwoo Shin, Honglak Lee, Pieter Abbeel, and Kimin
  Lee.
\newblock Preference transformer: Modeling human preferences using transformers
  for rl.
\newblock {\em arXiv preprint arXiv:2303.00957}, 2023.

\bibitem{kurby2008segmentation}
Christopher~A Kurby and Jeffrey~M Zacks.
\newblock Segmentation in the perception and memory of events.
\newblock {\em Trends in cognitive sciences}, 12(2):72--79, 2008.

\bibitem{gilwoo2015hierarchical}
Gilwoo Lee, Tomás Lozano-Pérez, and Leslie~Pack Kaelbling.
\newblock Hierarchical planning for multi-contact non-prehensile manipulation.
\newblock In {\em 2015 IEEE/RSJ International Conference on Intelligent Robots
  and Systems (IROS)}, pages 264--271, 2015.

\bibitem{lee2021bpref}
Kimin Lee, Laura Smith, Anca Dragan, and Pieter Abbeel.
\newblock B-pref: Benchmarking preference-based reinforcement learning.
\newblock {\em arXiv preprint arXiv:2111.03026}, 2021.

\bibitem{lee2021pebble}
Kimin Lee, Laura~M Smith, and Pieter Abbeel.
\newblock Pebble: Feedback-efficient interactive reinforcement learning via
  relabeling experience and unsupervised pre-training.
\newblock In {\em International Conference on Machine Learning}, pages
  6152--6163. PMLR, 2021.

\bibitem{liang2022reward}
Xinran Liang, Katherine Shu, Kimin Lee, and Pieter Abbeel.
\newblock Reward uncertainty for exploration in preference-based reinforcement
  learning.
\newblock {\em arXiv preprint arXiv:2205.12401}, 2022.

\bibitem{liu2022metarewardnet}
Runze Liu, Fengshuo Bai, Yali Du, and Yaodong Yang.
\newblock Meta-reward-net: Implicitly differentiable reward learning for
  preference-based reinforcement learning.
\newblock In Alice~H. Oh, Alekh Agarwal, Danielle Belgrave, and Kyunghyun Cho,
  editors, {\em Advances in Neural Information Processing Systems}, 2022.

\bibitem{luan2025efficient}
Yao Luan, Qing-Shan Jia, Yi~Xing, Zhiyu Li, and Tengfei Wang.
\newblock An efficient real-time railway container yard management method based
  on partial decoupling.
\newblock {\em IEEE Transactions on Automation Science and Engineering}, 2025.

\bibitem{mnih2013playing}
Volodymyr Mnih, Koray Kavukcuoglu, David Silver, Alex Graves, Ioannis
  Antonoglou, Daan Wierstra, and Martin Riedmiller.
\newblock Playing atari with deep reinforcement learning.
\newblock {\em arXiv preprint arXiv:1312.5602}, 2013.

\bibitem{clarify}
Ni~Mu, Hao Hu, Yiqin Yang, Bo~Xu, and Qing-shan Jia.
\newblock Clarify: Contrastive preference reinforcement learning for untangling
  ambiguous queries.
\newblock In {\em Proceedings of the 42th International Conference on Machine
  Learning}, 2025.

\bibitem{PbMORL}
Ni~Mu, Yao Luan, and Qing-Shan Jia.
\newblock Preference-based multi-objective reinforcement learning.
\newblock {\em IEEE Transactions on Automation Science and Engineering}, 2025.

\bibitem{sepoa}
Ni~Mu, Yao Luan, Yiqin Yang, and Qing-shan Jia.
\newblock S-epoa: Overcoming the indistinguishability of segments with
  skill-driven preference-based reinforcement learning.
\newblock {\em arXiv preprint arXiv:2408.12130}, 2024.

\bibitem{myers2024learning}
Vivek Myers, Chongyi Zheng, Anca Dragan, Sergey Levine, and Benjamin Eysenbach.
\newblock Learning {{Temporal Distances}}: {{Contrastive Successor Features Can
  Provide}} a {{Metric Structure}} for {{Decision-Making}}.
\newblock In {\em International {{Conference}} on {{Machine Learning}}}, 2024.

\bibitem{oord2018cpc}
Aaron van~den Oord, Yazhe Li, and Oriol Vinyals.
\newblock Representation learning with contrastive predictive coding.
\newblock {\em arXiv preprint arXiv:1807.03748}, 2018.

\bibitem{park2022surf}
Jongjin Park, Younggyo Seo, Jinwoo Shin, Honglak Lee, Pieter Abbeel, and Kimin
  Lee.
\newblock Surf: Semi-supervised reward learning with data augmentation for
  feedback-efficient preference-based reinforcement learning.
\newblock {\em arXiv preprint arXiv:2203.10050}, 2022.

\bibitem{park2024metra}
Seohong Park, Oleh Rybkin, and Sergey Levine.
\newblock {METRA}: Scalable unsupervised {RL} with metric-aware abstraction.
\newblock In {\em The Twelfth International Conference on Learning
  Representations}, 2024.

\bibitem{perolat2022mastering}
Julien Perolat, Bart De~Vylder, Daniel Hennes, Eugene Tarassov, Florian Strub,
  Vincent de~Boer, Paul Muller, Jerome~T Connor, Neil Burch, Thomas Anthony,
  et~al.
\newblock Mastering the game of stratego with model-free multiagent
  reinforcement learning.
\newblock {\em Science}, 378(6623):990--996, 2022.

\bibitem{poole2019variational}
Ben Poole, Sherjil Ozair, Aaron Van Den~Oord, Alex Alemi, and George Tucker.
\newblock On variational bounds of mutual information.
\newblock In {\em International Conference on Machine Learning}, pages
  5171--5180. PMLR, 2019.

\bibitem{racaniere2020automated}
Sebastien Racaniere, Andrew Lampinen, Adam Santoro, David Reichert, Vlad
  Firoiu, and Timothy Lillicrap.
\newblock Automated curriculum generation through setter-solver interactions.
\newblock In {\em International conference on learning representations}, 2020.

\bibitem{sermanet2018time}
Pierre Sermanet, Corey Lynch, Yevgen Chebotar, Jasmine Hsu, Eric Jang, Stefan
  Schaal, Sergey Levine, and Google Brain.
\newblock Time-contrastive networks: Self-supervised learning from video.
\newblock In {\em 2018 IEEE international conference on robotics and automation
  (ICRA)}, pages 1134--1141. IEEE, 2018.

\bibitem{shin2023benchmarks}
Daniel Shin, Anca~D Dragan, and Daniel~S Brown.
\newblock Benchmarks and algorithms for offline preference-based reward
  learning.
\newblock {\em arXiv preprint arXiv:2301.01392}, 2023.

\bibitem{silver2016mastering}
David Silver, Aja Huang, Chris~J Maddison, Arthur Guez, Laurent Sifre, George
  Van Den~Driessche, Julian Schrittwieser, Ioannis Antonoglou, Veda
  Panneershelvam, Marc Lanctot, et~al.
\newblock Mastering the game of go with deep neural networks and tree search.
\newblock {\em nature}, 529(7587):484--489, 2016.

\bibitem{tassa2018dmcontrol}
Yuval Tassa, Yotam Doron, Alistair Muldal, Tom Erez, Yazhe Li, Diego de~Las
  Casas, David Budden, Abbas Abdolmaleki, Josh Merel, Andrew Lefrancq, et~al.
\newblock Deepmind control suite.
\newblock {\em arXiv preprint arXiv:1801.00690}, 2018.

\bibitem{wang2024multistage}
Dexin Wang, Chunsheng Liu, Faliang Chang, Hengqiang Huan, and Kun Cheng.
\newblock Multi-stage reinforcement learning for non-prehensile manipulation.
\newblock {\em IEEE Robotics and Automation Letters}, 9(7):6712--6719, 2024.

\bibitem{wang2023optimal}
Tongzhou Wang, Antonio Torralba, Phillip Isola, and Amy Zhang.
\newblock Optimal goal-reaching reinforcement learning via quasimetric
  learning.
\newblock In {\em International Conference on Machine Learning}, pages
  36411--36430. PMLR, 2023.

\bibitem{wu2018laplace}
Yifan Wu, George Tucker, and Ofir Nachum.
\newblock The laplacian in {RL}: Learning representations with efficient
  approximations.
\newblock In {\em International Conference on Learning Representations}, 2019.

\bibitem{yu2020metaworld}
Tianhe Yu, Deirdre Quillen, Zhanpeng He, Ryan Julian, Karol Hausman, Chelsea
  Finn, and Sergey Levine.
\newblock Meta-world: A benchmark and evaluation for multi-task and meta
  reinforcement learning.
\newblock In {\em Conference on robot learning}, pages 1094--1100. PMLR, 2020.

\bibitem{zacks2007event}
Jeffrey~M Zacks, Nicole~K Speer, Khena~M Swallow, Todd~S Braver, and Jeremy~R
  Reynolds.
\newblock Event perception: a mind-brain perspective.
\newblock {\em Psychological bulletin}, 133(2):273, 2007.

\bibitem{zhong2024no}
Dianyu Zhong, Yiqin Yang, and Qianchuan Zhao.
\newblock No prior mask: Eliminate redundant action for deep reinforcement
  learning.
\newblock In {\em Proceedings of the AAAI Conference on Artificial
  Intelligence}, volume~38, pages 17078--17086, 2024.

\bibitem{Zintgraf2020VariBAD}
Luisa Zintgraf, Kyriacos Shiarlis, Maximilian Igl, Sebastian Schulze, Yarin
  Gal, Katja Hofmann, and Shimon Whiteson.
\newblock Varibad: A very good method for bayes-adaptive deep rl via
  meta-learning.
\newblock In {\em International Conference on Learning Representations}, 2020.

\end{thebibliography}

\clearpage

\appendix

\section{Proof}
\label{app:proof} 

\subsection{Proof of Proposition \ref{prop:timestep-2}}

\begin{restatable}{proposition}{timestep} \label{prop:timestep}
     For an MDP and trajectories generated by its optimal policy $\pi^*$, consider a classifier $\hat{T}(s)$ that takes a state $s$ as input and outputs the probability $p_t(s)$ that the state $s$ is collected at step $t\in\{1,2,\dots N\}$, $\hat{T}(s)=\max_{t}p_t(s)$. Denote the accuracy of the classifier as $acc$, the multi-stage measure $\mathcal{F}$ is lower bounded by $acc \cdot \mathbb{E}_{\tau\sim\pi^*,s\in\tau}\left[\max_t p_{t}(s)\right]$.
\end{restatable}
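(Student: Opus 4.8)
The plan is to lower-bound $\mathcal{F}$ by exhibiting one explicit stage decomposition built directly from the timestep classifier, and then to relate the value of this decomposition to the classifier's accuracy. Since the stage measure is obtained by maximizing over feasible stage assignments $G^\tau$, and the claim is asserted independently of $N_\text{stage}$, it suffices to produce a single valid decomposition whose objective value already meets the stated bound; the optimum in the definition of $\mathcal{F}$ can only be larger.

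\emph{Step 1 (classifier-induced decomposition).} I would take $N_\text{stage}=N$ stages, one per timestep, and set $F(s,\omega_t)=p_t(s)$, so that the probability of assigning $s$ to stage $\omega_t$ is exactly the classifier's predicted probability that $s$ was collected at step $t$. Along any trajectory $\tau=(s_0,\dots,s_{T-1})$ generated by $\pi^*$, I would use the canonical assignment $G^\tau(i)=i+1$, which places the $i$-th state in the $(i+1)$-th stage. This assignment is feasible for problem \eqref{eq:stage}: it satisfies $G^\tau(0)=1$ and, for each $i\ge 1$, both $G^\tau(i)-1=i\le G^\tau(i-1)$ and $G^\tau(i-1)\le G^\tau(i)$ hold.

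\emph{Step 2 (reduce to the expected true-class probability).} Because $G^\tau(i)=i+1$ is feasible, the inner maximum in the definition of $\mathcal{F}$ is at least the value it attains, yielding
\[
\mathcal{F}\;\ge\;\mathbb{E}_{\tau\sim\pi^*}\!\Big[\mathbb{E}_{s_i\in\tau}\,[\,p_{i+1}(s_i)\,]\Big]\;=\;\mathbb{E}_{\tau\sim\pi^*,\,s\in\tau}\big[\,p_{t^\star(s)}(s)\,\big],
\]
where $t^\star(s)$ denotes the true collection step of $s$ (so that $s_i$ has $t^\star=i+1$). Thus $\mathcal{F}$ is at least the expected mass the classifier places on the \emph{correct} timestep. \emph{Step 3 (insert accuracy and the confidence term).} The classifier is correct on $s$ exactly when $t^\star(s)=\arg\max_t p_t(s)$, in which case $p_{t^\star(s)}(s)=\max_t p_t(s)$, and otherwise $p_{t^\star(s)}(s)\ge 0$. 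Hence $p_{t^\star(s)}(s)\ge\mathbf{1}[\text{correct}]\cdot\max_t p_t(s)$, and taking expectations reduces the goal to
\[
\mathbb{E}\big[\mathbf{1}[\text{correct}]\cdot\max_t p_t(s)\big]\;\ge\;acc\cdot\mathbb{E}\big[\max_t p_t(s)\big].
\]

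The main obstacle is precisely this last factorization, which is equivalent to $\mathrm{Cov}\big(\mathbf{1}[\text{correct}],\,\max_t p_t(s)\big)\ge 0$, i.e.\ that the classifier is on average more confident on the states it labels correctly. This is where a regularity hypothesis must enter, since without one the inequality can fail (a confidently wrong classifier makes the covariance negative). Under the calibration assumption of Proposition~\ref{prop:timestep-2}, $\Pr[\text{correct}\mid\max_t p_t=c]=c$, so $\mathbb{E}[\mathbf{1}[\text{correct}]\max_t p_t]=\mathbb{E}[(\max_t p_t)^2]\ge(\mathbb{E}[\max_t p_t])^2=acc^2$ by Cauchy--Schwarz; this both closes the gap in Step~3 and delivers the $\mathcal{F}\ge acc^2$ corollary. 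I would therefore present Step~3 through this positive-association/calibration argument, expecting the verification of the non-negative correlation between confidence and correctness to be the only delicate point, with Steps~1--2 being routine once the feasibility of $G^\tau(i)=i+1$ is checked.
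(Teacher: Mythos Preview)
Your plan follows the same skeleton as the paper's proof: build a stage decomposition from the timestep classifier, use a feasible $G^\tau$ to lower-bound $\mathcal{F}$ by the expected mass the classifier puts on the \emph{true} timestep, then factor this into $acc\cdot\mathbb{E}[\max_t p_t(s)]$. Two points of comparison are worth making.

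First, the paper does not take $N_\text{stage}=N$; it aggregates consecutive timesteps into $|\Omega|$ blocks, sets $F(s,\omega_i)=\sum_{t\in t_i^+}p_t(s)$, observes that the aggregated classifier has accuracy $\ge acc$, and at the end uses $\max_i\sum_{t\in t_i^+}p_t(s)\ge\max_t p_t(s)$. This extra step is what makes the bound hold for an \emph{arbitrary} prescribed stage count $|\Omega|$ (this is what the paper means by ``holds independently of $N_\text{stage}$''). Your one-stage-per-timestep construction is cleaner but only certifies the bound for $N_\text{stage}=N$; if $|\Omega|$ is part of the given data, you would need the aggregation device too.

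Second, on the factorization in your Step~3 you are actually more careful than the paper. The paper's proof asserts $\mathcal{F}\ge acc\cdot\mathbb{E}[\max_i p^+_{t_i^+}(s)]$ by saying ``each state has a worst-case probability $1-acc$ of being assigned to a wrong stage'' and bounding the wrong-case contribution by zero---which is exactly the heuristic you (rightly) flag as needing a non-negative correlation between correctness and confidence. So your diagnosis of where the delicate point lies is accurate, and your calibration route to close it (and to recover the $acc^2$ corollary via $\mathbb{E}[(\max_t p_t)^2]\ge(\mathbb{E}[\max_t p_t])^2$) is sound.
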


\begin{proof}
    We prove this proposition by constructing a stage chain with the classifier.
    
    First, we aggregate the timesteps into $|\Omega|$ sets by defining 
    \begin{equation}
        t^+_i=\{t|\left\lceil \frac{t}{T/|\Omega|}\right\rceil=i\}.
    \end{equation}
    Then, we define an aggregated classifier $\hat{T}^+(s)$ that receives a state and outputs the probability $p_{t^+_i}(s)$ that the given state is collected at the aggregated step $t^+_i, i\in\{1,2,\dots |\Omega|\}$, Specifically, we have
    \begin{equation}
        p^+_{t^+_i}(s)=\sum_{t\in t^+_i}p_t(s),
    \end{equation}
    and the aggregated classifier is defined as 
    \begin{equation}
        \hat{T}^+(s)=\max_{t^+_i}p_{t^+_i}(s).
    \end{equation}
    The accuracy of the classifier is given by 
    \begin{equation}
        acc=\mathbb{E}_{\tau\sim\pi^*,s\in\tau}[\mathbb{I}(t^*(s,\tau)=\arg\max_t p_t(s))], 
        \label{eq:classifier-acc}
    \end{equation}
    where $t^*(s,\tau)$ is the step of state $s$ in trajectory $\tau$.
    Therefore, the accuracy of the aggregated classifier is lower bounded by $acc$, 
    since any errors made by the aggregated classifier must also be present in the original classifier, while the reverse is not true.

    Note that the aggregated timestep set shows a chain structure. We divide the stages according to these aggregated timestep sets, where the stage $\omega_i$ corresponds to the aggregated timestep set $t^+_i$.
    Then we model the probability of state $s$ being in stage $\omega_i$ using the aggregated classifier: 
    \begin{equation}
        F(s,\omega_i)=p_{t^+_i}(s).
    \end{equation}

    If the classifier is perfectly accurate, i.e. $acc=1$, the chain constraints defined in \eqref{eq:stage} are naturally satisfied since the stages are determined by the aggregated time steps.
    In this ideal scenario, we achieve a multi-stage measure represented as:
    \begin{equation}
        \mathcal{F}^* = \mathbb{E}_{\tau\sim\pi^*,s\in\tau}[\max_i p^+_{t^+_i}(s)].
    \end{equation}
    However, in practice, $acc\neq 1$. In this case, each state $s$ has a worst-case probability $1-acc$ of being assigned to a wrong stage $\omega^-$, where $\omega^-\neq \omega_{\lceil \frac{t^*(s,\tau)}{T/|\Omega|}\rceil}$. 
    Consequently, the mapping probability $F$ of the true stage is 
    \begin{equation}
        F(s,\omega_{\lceil \frac{t^*(s,\tau)}{T/|\Omega|}\rceil}|\omega_{\lceil \frac{t^*(s,\tau)}{T/|\Omega|}\rceil}\neq \arg\max_{\omega_j} F(s,\omega_j))\ge0.
    \end{equation}
    Therefore, we could derive a lower bound of $\mathcal{F}$:
    \begin{equation}
    \begin{aligned}
        \mathcal{F}
        &\ge acc \cdot \mathbb{E}_{\tau\sim\pi^*,s\in\tau}\left[\max_i p^+_{t^+_i}(s)\right]
        = acc \cdot \mathbb{E}_{\tau\sim\pi^*,s\in\tau}\left[\max_i \sum_{t\in t^+_i}p_{t}(s)\right]\\
        &\ge acc \cdot \mathbb{E}_{\tau\sim\pi^*,s\in\tau}\left[\max_t p_{t}(s)\right],
    \end{aligned}
    \end{equation} 
    which concludes the proof.
\end{proof}

\begin{lemma} \label{lem:classify}
    For a multi-class classification problem, where $\mathcal{X}$ is the input space, $\mathcal{Y}=\{1,2,\dots,K\}$ is the set of classes. Consider a classifier $f:\mathcal{X}\rightarrow \Delta{\mathcal{Y}}$, and let $f_k(x)$ denote the predicted probability that sample $x$ belongs to class $k$. Suppose the data $(x,y(x))$ (or $(x,y)$ for simplicity) follows the joint distribution $\mathcal{P}$.
    Define the accuracy of the classifier as $acc=\mathbb{E}_{(x,y)\sim\mathcal{P}}[\mathbb{I}(y=\arg\max_k f_k(x))]$. If the classifier is perfectly calibrated, i.e., for all classes $k\in\mathcal{Y}$ and probability $p\in[0,1]$, we have $\mathrm{Pr}(y=k|f_k(x)=p)=p$, then we have $acc=\mathbb{E}_{x\sim\mathcal{P}} [\max_k f_k(x)]$.
\end{lemma}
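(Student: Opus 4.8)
The plan is to prove the identity by writing the accuracy as the probability that the \emph{top-label} prediction is correct, and then pushing the calibration hypothesis through a partition over the predicted class. Write $\hat{k}(x) = \arg\max_k f_k(x)$ for the predicted class and $M(x) = \max_k f_k(x) = f_{\hat{k}(x)}(x)$ for the associated confidence. By definition $\mathbb{I}(y = \arg\max_k f_k(x)) = \mathbb{I}(y = \hat{k}(x))$, so $acc = \mathbb{E}_{(x,y)\sim\mathcal{P}}[\mathbb{I}(y = \hat{k}(x))] = \Pr(y = \hat{k}(x))$, and the goal reduces to showing $\Pr(y = \hat{k}(x)) = \mathbb{E}_{x}[M(x)]$.

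First I would decompose over the value of the predicted class,
\[
acc = \sum_{k=1}^{K} \Pr\big(y = k,\ \hat{k}(x) = k\big) = \sum_{k=1}^{K} \mathbb{E}_{x}\big[\mathbb{I}(\hat{k}(x) = k)\,\mathbb{I}(y = k)\big].
\]
For each term I would condition on the coordinate $f_k(x)$ and apply the tower property, using that the calibration hypothesis is exactly the statement $\mathbb{E}[\mathbb{I}(y=k)\mid f_k(x)] = f_k(x)$ (the expectation form of $\Pr(y=k\mid f_k(x)=p)=p$). On the event $\{\hat{k}(x)=k\}$ one has $f_k(x)=M(x)$, so each term collapses to $\mathbb{E}_x[\mathbb{I}(\hat{k}(x)=k)\,M(x)]$; summing over $k$ and using $\sum_k \mathbb{I}(\hat{k}(x)=k)=1$ yields $\mathbb{E}_x[M(x)] = \mathbb{E}_x[\max_k f_k(x)]$, as desired.

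The step I expect to be the main obstacle is justifying the application of calibration \emph{inside} the indicator $\mathbb{I}(\hat{k}(x)=k)$: the tower-property move above is clean only when the event $\{\hat{k}(x)=k\}$ is measurable with respect to $\sigma(f_k(x))$, whereas in general the argmax depends on all coordinates $f_1(x),\dots,f_K(x)$, not on $f_k(x)$ alone. The cleanest route is to recast the hypothesis in top-label form, namely to derive $\Pr(y=\hat{k}(x)\mid M(x)=p)=p$ from the stated per-class calibration and then compute $acc = \mathbb{E}[\Pr(y=\hat{k}(x)\mid M(x))]=\mathbb{E}[M(x)]$ directly; I would therefore spend the bulk of the argument verifying that per-class calibration, summed over the disjoint predicted-class partition, delivers this confidence-level identity. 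A secondary point to dispose of is ties in the argmax, which I would handle with a fixed deterministic tie-breaking rule so that $\{\hat{k}(x)=k\}_{k=1}^{K}$ is a genuine partition and the indicators sum to one.
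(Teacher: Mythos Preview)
Your route differs from the paper's, and the obstacle you flag is precisely where the two diverge. The paper does not partition over the predicted class and work at the level of $\sigma(f_k(x))$; instead it first argues, from per-class calibration together with the fact that $f$ is a deterministic function of $x$, the pointwise identity $\Pr(y=k\mid x)=f_k(x)$. With that in hand it simply conditions on $x$:
\[
acc=\mathbb{E}_x\sum_y \Pr(y\mid x)\,\mathbb{I}\big(y=\arg\max_k f_k(x)\big)=\mathbb{E}_x\sum_y f_y(x)\,\mathbb{I}\big(y=\arg\max_k f_k(x)\big)=\mathbb{E}_x\big[\max_k f_k(x)\big].
\]
Because the argmax event is trivially $\sigma(x)$-measurable, this sidesteps your measurability concern entirely.

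The genuine gap in your plan is the step ``derive $\Pr(y=\hat k(x)\mid M(x)=p)=p$ from per-class calibration.'' Per-class calibration alone does not imply top-label calibration, for exactly the reason you already isolated: the event $\{\hat k(x)=k\}$ is not $\sigma(f_k(x))$-measurable, so $\mathbb{E}[\mathbb{I}(y=k)\mid f_k(x)]=f_k(x)$ gives no handle on $\mathbb{E}[\mathbb{I}(y=k)\,\mathbb{I}(\hat k(x)=k)]$. Summing over the predicted-class partition does not rescue this, since the partition cells need not be unions of level sets of the corresponding $f_k$. The paper's pointwise identity $\Pr(y=k\mid x)=f_k(x)$ is the missing ingredient: it is strictly stronger than both per-class and top-label calibration, and once you have it your tower-property move (now conditioning on $x$ rather than on $f_k(x)$) goes through immediately. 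So rather than trying to push calibration through the argmax at the level of $f_k$, you should reproduce the paper's first step and work at the level of $x$.
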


\begin{proof}
    Since the classifier is perfectly calibrated, we have
    \begin{equation}
        \begin{aligned}
            &\mathrm{Pr}(y(x)=k|f_k(x)=p)=p\\
            \Leftrightarrow\quad & \frac{\mathrm{Pr}(y(x)=k,f_k(x)=p)}{\mathrm{Pr}(f_k(x)=p)}=p\\
            \Leftrightarrow\quad & \frac{\mathrm{Pr}(x)\mathrm{Pr}(y(x)=k,f_k(x)=p|x)}{\mathrm{Pr}(x)\mathrm{Pr}(f_k(x)=p|x)}=p\\
            \Leftrightarrow\quad & \frac{\mathrm{Pr}(x)\mathrm{Pr}(y(x)=k|x)\mathrm{Pr}(f_k(x)=p|x)}{\mathrm{Pr}(x)\mathrm{Pr}(f_k(x)=p|x)}=p\\
            \Leftrightarrow\quad & \mathrm{Pr}(y(x)=k|x)=p=f_k(x),
        \end{aligned}
        \label{eq:conditional-prob}
    \end{equation}
    where $\mathrm{Pr}(x)$ represents the marginal distribution of sample $x$. The fourth line is because the classifier is conditionally independent of the ground truth label given the sample $x$.

    Then, we focus on the accuracy:
    \begin{equation}
        \begin{aligned}
            acc&=\mathbb{E}_{(x,y)\sim\mathcal{P}}[\mathbb{I}(y=\arg\max_k f_k(x))]\\
            &=\mathbb{E}_{x}\sum_y \mathrm{Pr}(y|x)[\mathbb{I}(y=\arg\max_k f_k(x))]\\
            &=\mathbb{E}_{x}\sum_y f_y(x)[\mathbb{I}(y=\arg\max_k f_k(x))]\\
            &=\mathbb{E}_{x}[f_{\arg\max_k f_k(x)}(x)]\\
            &=\mathbb{E}_{x}[\max_kf_{k}(x)],
        \end{aligned}
    \end{equation}
    where the third equation is obtained by substituting \eqref{eq:conditional-prob}.
    That concludes the proof.
\end{proof}

\timestepcalibrated*

\begin{proof}
    Using Proposition \ref{prop:timestep}, we have
    \begin{equation}
        \mathcal{F} \ge acc \cdot \mathbb{E}_{\tau\sim\pi^*,s\in\tau}\left[\max_t p_{t}(s)\right],
    \end{equation}
    Using Lemma \ref{lem:classify}, we have $\mathbb{E}_{\tau\sim\pi^*,s\in\tau}\left[\max_t p_{t}(s)\right]=acc$, which leads to the conclusion straightforwardly.
\end{proof}

\subsection{Proof of Proposition \ref{prop:n-query} and \ref{prop:n-query-bias}}

The proofs are based on the abstract MDP introduced in Section \ref{sec:toy-metaworld}.
This MDP is fully stage-wise, with a discrete state space $\Omega$ and a discrete action space $\Upsilon$, where each state comprises a stage.
In addition, we make the following assumptions:
\begin{itemize}
    \item Human preferences are perfectly aligned with an oracle reward function $\bar{r}(\omega,\upsilon)$. The estimated reward function $\bar{r}_\psi(\omega,\upsilon)$ is modeled with a tabular model, which is parameterized by a matrix $\psi$ of shape $\stages\times\stagea$. 
    \item The learning process is ideal, i.e., consider the PbRL problem as a ranking problem where the true order is defined by the reward function $\bar{r}(\omega,\upsilon)$, and suppose the model can fully fit all preferences provided during training. 
    Such an oracle learning process exists. For instance, the algorithm could utilize a decision forest to comprehensively represent all provided preferences, where each parent node is more preferred than its child nodes. 
\end{itemize}

\begin{lemma} \label{lem:sort}
    Recover the order of $n$ elements needs $C(n)=\log_2(n!)=\mathcal{O}(n\log n)$ queries.
\end{lemma}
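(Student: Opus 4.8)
The plan is to treat the recovery of the ordering as a comparison-sorting problem and to establish the claim through the classical information-theoretic (decision-tree) lower bound together with a matching constructive upper bound, converting to the stated asymptotic form via Stirling's approximation.

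First I would model each query as a single pairwise comparison that returns one of two outcomes (which of the two queried elements ranks higher under $\bar{r}$). Any deterministic algorithm that recovers the total order using such queries can then be represented as a binary decision tree: internal nodes correspond to comparisons, the two children correspond to the two possible answers, and each leaf is labelled by the ordering the algorithm outputs along that root-to-leaf path. The number of queries made on a given input equals the depth of the reached leaf, so the worst-case query count is exactly the height of the tree.

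Next, for the lower bound I would argue that, since there are $n!$ distinct orderings and a correct algorithm must terminate at a distinct leaf for each one, the decision tree contains at least $n!$ leaves. A binary tree of height $h$ has at most $2^h$ leaves, so $2^h \ge n!$, giving $h \ge \log_2(n!)$; that is, at least $\log_2(n!)$ queries are necessary in the worst case. For the matching upper bound I would invoke any $\mathcal{O}(n\log n)$-comparison sorting procedure (e.g.\ merge sort), which recovers the full order using $\mathcal{O}(n\log n)$ comparisons, so the necessary and sufficient counts agree up to constants.

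Finally, I would convert $\log_2(n!)$ into the stated asymptotic form via Stirling's approximation: $\log_2(n!) = \sum_{k=1}^{n} \log_2 k = n\log_2 n - n\log_2 e + \mathcal{O}(\log n) = \Theta(n\log n)$, which yields $C(n) = \log_2(n!) = \mathcal{O}(n\log n)$. The only point needing a little care is the reading of the equality $C(n)=\log_2(n!)$, which should be interpreted as the worst-case query count being $\lceil \log_2(n!)\rceil$ and matched asymptotically from above by merge sort; the decision-tree counting and the Stirling estimate are otherwise routine, so I do not anticipate a genuine obstacle here.
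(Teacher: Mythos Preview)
Your proposal is correct and follows essentially the same decision-tree argument as the paper: count the $n!$ possible orderings, observe that a binary comparison tree needs height at least $\log_2(n!)$ to distinguish them, and reduce to $\mathcal{O}(n\log n)$. Your version is in fact more complete than the paper's, which only sketches the lower bound and omits both the matching constructive upper bound (merge sort) and the Stirling computation you supply.
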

\begin{proof}
    Consider a set of $n$ arbitrary elements, where the total number of possible sorting outcomes is $n!$. 
    The ranking process can be modeled with a decision tree, where each leaf node corresponds to a distinct sorted outcome, and each branch represents a comparison between two elements. 
    The height of this decision tree reflects the minimum number of comparisons required to arrive at a correctly ordered result, i.e., the number of comparisons required can be expressed as $C(n) = \log_2(n!)$.
    Then, it is intuitive to check $\log_2(n!)=\mathcal{O}(n\log n)$, which concludes the proof.
\end{proof}

\ranknormal*
\begin{proof}
    We consider the worst-case scenario, where the optimal solution can only be determined after recovering the total order relation of action $\upsilon$ for each stage $\omega$, i.e., $(\omega_i,\upsilon_0),(\omega_i,\upsilon_1),\dots,(\omega_i,\upsilon_{\stageai})$ is ordered for each $\omega_i$.

    Stage-aligned PbRL ranks all actions from each stage, which requires $c_1=\stages\cdot C(\stagea)=\stages\log((\stagea)!)$ queries to learn the optimal policy.
    
    Conventional PbRL ranks all stage-action pairs $(\omega,\upsilon)$. If the $\stages$ groups of actions are ordered, the optimal policy is derived.
    Similar to Lemma \ref{lem:sort}, there are $(\stages\stagea)!$ possible choices, and $(\stages)!$ of them are correct.
    Therefore, conventional PbRL requires $c_2=\log((\stages\stagea)!)-\log((\stages)!)$ queries to learn the optimal policy.

    Compare $c_1$ and $c_2$, we have
    \begin{equation}
    \begin{aligned}
        \exp{(c_2-c_1)}= \frac{(\stages\stagea)!}{e^{\stages}(\stagea)!(\stages)!}
        \approx \frac{1}{\sqrt{2\pi}}\frac{\stages^{\stages(\stagea-1)}\stagea^{\stagea(\stages-1)}}{e^{\stages+\stages\stagea-(\stages+\stagea)}},
    \end{aligned}
    \end{equation}
    where the approximation is based on Stirling's formula, i.e., $n! \sim \sqrt{2 \pi n} \left( \frac{n}{e} \right)^n$.
    As $\stages,\stagea\gg e$, we simplify the above equation to
    \begin{equation}
    \begin{aligned}
        \exp{(c_2-c_1)}
        \approx \frac{1}{\sqrt{2\pi}}\frac{\stages^{\stages\stagea}\stagea^{\stages\stagea}}{e^{\stages\stagea}}=\frac{1}{\sqrt{2\pi}} \left(\frac{\stages\stagea}{e}\right)^{\stages\stagea} \gg 1.
    \end{aligned}
    \end{equation}
    Then, we have
    \begin{equation}
        c_2-c_1=\mathcal{O}(\stages\stagea \log(\stages\stagea)),
    \end{equation}
    Which concludes the proof.
\end{proof}

\rankbias*
\begin{proof}
    In scenarios with large stage bias, queries with $i\neq i'$ do not contribute to the ranking (in fact, only $\stages$ of them do some contribution, as to determine the order of $\omega_i$).

    The probability to sample $\omega_i,\omega_{i'}$ s.t. $i=i'$ in all stage-action pair is 
    \begin{equation}
        \frac{\stages\cdot C_{\stagea}^2}{C_{\stages\stagea}^2}=\frac{\stages\stagea(\stagea-1)}{\stages\stagea(\stages\stagea-1)}\approx \frac{1}{\stages},
    \end{equation}
    As stage-aligned PbRL ranks all actions from each stage, which requires $c_1=\stages\cdot C(\stagea)=\stages\log((\stagea)!)$ queries to learn the optimal policy, the conventional PbRL requires  $\mathcal{O}(\stages^2\stagea\log(\stagea))$ queries to derive the optimal policy. 
\end{proof}

\newcommand{\dsdstageU}{{\delta_\text{s}^+}}
\newcommand{\dsdcrossL}{{\delta_\text{c}^-}}
\newcommand{\dsdU}{{\Delta^+}}

\subsection{Theoretical Analysis for the Quadrilateral Distance} \label{app:theory_quad}

\begin{restatable}{proposition}{quad-distance} \label{prop:quad_distance}

Let $\Omega$ be the set of stages and $d_\text{SD}: \mathcal{S} \times \mathcal{S} \to \mathbb{R}^+$ be a quasimetric temporal distance function. Assume:

(1) (Ideal stage partition) For each state $s \in \mathcal{S}$, there exists a unique stage $\omega \in \Omega$ such that $F(s, \omega) = 1$, denoted as $s \in \omega$. \\
(2) (Stage compactness) For any stage $\omega \in \Omega$, there exists $\dsdstageU \ge 0$ s.t. $\forall s, s' \in \omega, d_\text{SD}(s,s') \le \dsdstageU$. \\
(3) (Cross-stage separation) There exists $\dsdcrossL > 0$ s.t. if $s \in \omega$ and $s' \in \omega'$ with $\omega \neq \omega'$, then $d_\text{SD}(s,s') \ge \dsdcrossL$. \\
(4) (Boundedness) There exists $\dsdU \ge \dsdstageU$ s.t. $\forall s,s'\in \mathcal{S}, d_\text{SD}(s,s') \le \dsdU$.

For two segments $\sigma_0 = (s_0^0, s_0^{H-1})$ and $\sigma_1 = (s_1^0, s_1^{H-1})$, we define the quadrilateral distance $d_\text{stage}(\sigma_0, \sigma_1)$ as in \eqref{eq:quadrilateral}. 
As visualized in Figure \ref{fig:quad}, $d_\text{stage}(\sigma_0, \sigma_1)$ is calculated as the average value of temporal distances between the two start and end points of the two segments:
$$
d_\text{stage}(\sigma_0, \sigma_1) = \frac{1}{4} \left[ d_\text{SD}(s_0^0, s_1^0) + d_\text{SD}(s_0^{H-1}, s_1^{H-1}) + d_\text{SD}(s_0^0, s_1^{H-1}) + d_\text{SD}(s_0^{H-1}, s_1^0) \right].
$$
Then, $d_\text{stage}$ satisfies the upper and lower bounds as in Table \ref{tab:alignment_cases}, and the upper bounds and lower bounds are strictly increasing across each alignment case, indicating smaller distance values for stage-aligned queries.

\begin{table}[ht]
\centering
\caption{The upper and lower bounds of quadrilateral distance $d_\text{stage}(\sigma_0, \sigma_1)$ in alignment cases.}
\label{tab:alignment_cases}
\begin{tabular}{ccc}
\toprule
Case & Stage Alignment Condition & Bounds of $d_\text{stage}(\sigma_0, \sigma_1)$ \\
\midrule
A & All start and end points $\in \omega$ 
& $0 \le d_\text{stage}(\sigma_0, \sigma_1) \le \dsdstageU$ \\
B & $s_0^0, s_1^0 \in \omega$, $s_0^{H-1}, s_1^{H-1} \in \omega'$ 
& $\dsdcrossL/2 \le d_\text{stage}(\sigma_0, \sigma_1) \le (\dsdstageU + \dsdU)/2$ \\
C & $s_0^0, s_1^0 \in \omega$, others $\notin \omega$ 
& $3\dsdcrossL/4 \le d_\text{stage}(\sigma_0, \sigma_1) \le (\dsdstageU + 3\dsdU)/4$ \\
D & All start and end points are in distinct stages 
& $\dsdcrossL \le d_\text{stage}(\sigma_0, \sigma_1) \le \dsdU$ \\
\bottomrule
\end{tabular}
\end{table}
\end{restatable}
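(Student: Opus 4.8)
The plan is to handle each of the four alignment cases separately, in each case bounding the four temporal-distance terms in the definition of $d_\text{stage}$ using the compactness assumption (2), the cross-stage separation assumption (3), and the boundedness assumption (4). The key observation is that each of the four terms $d_\text{SD}(\cdot,\cdot)$ connects a point from $\sigma_0$ to a point from $\sigma_1$, and whether the two endpoints lie in the same stage or in distinct stages dictates which bound applies: a same-stage pair contributes a value in $[0, \dsdstageU]$, while a cross-stage pair contributes a value in $[\dsdcrossL, \dsdU]$. Since $d_\text{stage}$ is the arithmetic mean of these four terms, I would simply count, in each case, how many of the four connecting pairs are same-stage versus cross-stage, then sum the corresponding per-term bounds and divide by four.

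Concretely, first I would classify the four pairs appearing in \eqref{eq:quadrilateral}, namely $(s_0^0, s_1^0)$, $(s_0^{H-1}, s_1^{H-1})$, $(s_0^0, s_1^{H-1})$, and $(s_0^{H-1}, s_1^0)$, under each case's stage-alignment hypothesis. In Case A all four endpoints lie in a single stage $\omega$, so all four pairs are same-stage, giving $0 \le d_\text{stage} \le \dsdstageU$. In Case B the two starts share stage $\omega$ and the two ends share stage $\omega'\neq\omega$, so the two ``side'' pairs are same-stage and the two ``diagonal'' pairs are cross-stage; averaging yields lower bound $\tfrac{1}{4}(0+0+\dsdcrossL+\dsdcrossL)=\dsdcrossL/2$ and upper bound $\tfrac{1}{4}(\dsdstageU+\dsdstageU+\dsdU+\dsdU)=(\dsdstageU+\dsdU)/2$. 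In Case C only the two starts share $\omega$, so exactly one pair $(s_0^0,s_1^0)$ is same-stage and the other three are cross-stage, giving bounds $3\dsdcrossL/4$ and $(\dsdstageU+3\dsdU)/4$. In Case D all four endpoints lie in distinct stages, so all four pairs are cross-stage, giving $\dsdcrossL \le d_\text{stage} \le \dsdU$. Each of these matches the entries of Table \ref{tab:alignment_cases}, so the per-case bounds follow immediately.

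Finally I would verify the monotonicity claim: that both the lower bounds $0,\ \dsdcrossL/2,\ 3\dsdcrossL/4,\ \dsdcrossL$ and the upper bounds $\dsdstageU,\ (\dsdstageU+\dsdU)/2,\ (\dsdstageU+3\dsdU)/4,\ \dsdU$ are strictly increasing across cases A through D. For the lower bounds this reduces to $0 < \dsdcrossL/2 < 3\dsdcrossL/4 < \dsdcrossL$, which holds since $\dsdcrossL>0$ by assumption (3). For the upper bounds, using $\dsdU \ge \dsdstageU$ (in fact needing $\dsdU > \dsdstageU$ for strictness), each successive term replaces a copy of $\dsdstageU$ by $\dsdU$ in the average, so the sequence strictly increases; I would state this as a short chain of inequalities driven by $\dsdU > \dsdstageU$.

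I expect the main obstacle to be conceptual precision rather than computational difficulty: the arithmetic is trivial, but the argument hinges on the subtle point that assumptions (2)--(4) only constrain each temporal distance term by whether its two endpoints are same-stage or cross-stage, and that the four pairs entering $d_\text{stage}$ are exactly the four ``edges and diagonals'' of the quadrilateral formed by the two segment endpoints. I would therefore take care to justify that the quasimetric and directional nature of $d_\text{SD}$ does not disturb these bounds --- assumptions (2) and (3) are stated symmetrically in $s,s'$, so the asymmetry of the quasimetric is irrelevant here --- and that the strictness in the monotonicity statement genuinely requires the mild extra condition $\dsdU > \dsdstageU$ (strict cross-stage gap), which I would flag explicitly.
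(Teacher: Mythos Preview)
Your proposal is correct and follows essentially the same approach as the paper: a case-by-case count of how many of the four endpoint pairs are same-stage versus cross-stage, applying assumptions (2)--(4) termwise and averaging, followed by the short monotonicity check. Your remarks on the irrelevance of the quasimetric's asymmetry and on the need for a strict inequality $\dsdU > \dsdstageU$ to get strict monotonicity of the upper bounds are in fact slightly more careful than the paper's own treatment, which invokes ``$\dsdU > \dsdcrossL$'' at that step.
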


\begin{proof}
\textbf{Case A (Complete Alignment):  }
If all endpoints belong to the same stage $\omega$, the stage compactness implies all terms in $d_\text{stage}(\sigma_0, \sigma_1)$ to be smaller than $\dsdstageU$. Thus, 
\begin{equation}
d_\text{stage}(\sigma_0, \sigma_1) \le \frac{1}{4}(4\dsdstageU) = \dsdstageU.
\end{equation}
Non-negativity of $d_\text{SD}$ ensures $d_\text{stage} \ge 0$.

\textbf{Case B (Endpoint Alignment):  }
For the aligned start points $s_0^0, s_1^0 \in \omega$ and the aligned end points $s_0^{H-1}, s_1^{H-1} \in \omega'$, the cross-stage terms in $d_\text{stage}(\sigma_0, \sigma_1)$ satisfy $d_\text{SD}(s_0^0, s_1^{H-1}) \ge \dsdcrossL$ and $d_\text{SD}(s_0^{H-1}, s_1^0) \ge \dsdcrossL$. 
Thus, the lower bound and upper bound of $d_\text{stage}(\sigma_0, \sigma_1)$ are
\begin{align}
d_\text{stage} & \ge \frac{1}{4}(0 + 0 + \dsdcrossL + \dsdcrossL) = \frac{\dsdcrossL}{2},
\\
d_\text{stage} & \le \frac{1}{4}(\dsdstageU + \dsdstageU + \dsdU + \dsdU) = \frac{\dsdstageU + \dsdU}{2}.
\end{align}

\textbf{Case C (Partial Alignment):  }
With only $s_0^0, s_1^0 \in \omega$, three terms in $d_\text{stage}(\sigma_0, \sigma_1)$ involve cross-stage pairs.
Thus, the lower bound and upper bound of $d_\text{stage}(\sigma_0, \sigma_1)$ are
\begin{align}
d_\text{stage} & \ge \frac{1}{4}(0 + \dsdcrossL + \dsdcrossL + \dsdcrossL) = \frac{3\dsdcrossL}{4},
\\
d_\text{stage} & \le \frac{1}{4}(\dsdstageU + \dsdU + \dsdU + \dsdU) = \frac{\dsdstageU + 3\dsdU}{4}.
\end{align}

\textbf{Case D (No Alignment):  }
In this case, all pairs are cross-stage. 
Thus, the lower bound and upper bound of $d_\text{stage}(\sigma_0, \sigma_1)$ are
\begin{equation}
\dsdcrossL \le d_\text{stage} \le \dsdU.
\end{equation}

\textbf{Priority Ordering:  }
From $\dsdU > \dsdcrossL$, we derive:
\begin{align}
\dsdstageU < \frac{\dsdstageU + \dsdU}{2} & < \frac{\dsdstageU + 3\dsdU}{4} < \dsdU,
\\
0 < \frac{\dsdcrossL}{2} & < \frac{3\dsdcrossL}{4} < \dsdcrossL.
\end{align}
Thus, strict inequality $A < B < C < D$ holds for both bounds.
\end{proof}

\section{Algorithm Implementation}
\label{app:algo-implement}

We illustrate the full process of \method as in Algorithm \ref{alg:main} and \ref{alg:query}.

\begin{algorithm}[ht]
\caption{\textsc{\method}}
\label{alg:main}
\begin{algorithmic}[1]
\REQUIRE Feedback frequency $K$, number of queries per feedback session $M$, Total feedback $N_\text{total}$, temporal distance update frequency $K_\text{SD}$
\STATE Initialize replay buffer $\mathcal{D}, \mathcal D^\text{SD}$, feedback buffer $\mathcal{D}^\sigma$
\STATE Initialize the policy $\pi(a|s)$ with unsupervised pretraining \cite{lee2021pebble}
\FOR{each iteration}
    \STATE Rollout with $\pi(a|s)$ and store $(s,a,r,s')$ into $\mathcal{D}, \mathcal D^\text{SD}$
    \IF{iteration \% $K=0$ and $|\mathcal{D}^\sigma|<N_\text{total}$}
        \STATE Select $\{(\sigma_0,\sigma_1)\}_{i=1}^M\sim\mathcal{D}$ using stage-aligned query selection (see Section \ref{subsec:query_selection})
        \STATE Query the teacher for preference $\{y\}_{i=1}^M$
        \STATE Store preference $\{(\sigma_0,\sigma_1,y)\}_{i=1}^M$ into $\mathcal{D}^\sigma$
    \ENDIF
    \STATE Update the reward model $\hat{r}_\psi$ with $\mathcal{D}^\sigma$ using \eqref{eq:CE_loss}
    \IF{iteration \% $K_\text{SD}=0$}
        \STATE Update the temporal distance model with $\mathcal{D}^\text{SD}$ using \eqref{eq:infonce}
        \STATE Set $\mathcal{D}^\text{SD} \leftarrow \emptyset$
    \ENDIF
    \STATE Relabel $\mathcal{D}$ with $\hat{r}_\psi$
    \STATE Update the policy $\pi(a|s)$ using $\mathcal{D}$
\ENDFOR
\end{algorithmic}
\end{algorithm}

\begin{algorithm}[ht]
\caption{\textsc{Stage-Aligned Query Selection}}
\label{alg:query}
\begin{algorithmic}[1]
\REQUIRE Number of candidate queries $N_\text{c}$, number of queries per feedback session $M$
\STATE Sample $N_\text{c}$ segment pairs $\{(\sigma_0^i,\sigma_1^i)\}_{i=1}^{N_\text{c}}$
\STATE Initialize query selection vector of shape $N_\text{c}$ with zeros: $\hat{I}=[0,0,\dots,0]$.
\FOR{each segment pair $(\sigma_0^i,\sigma_1^i)$} 
    \STATE Calculate selection score and store it in $\hat{I}$: $\hat{I}(i)\leftarrow I(\sigma_0^i,\sigma_1^i)$
\ENDFOR
\STATE Select $M$ queries with the largest selection score $\hat{I}$
\end{algorithmic}
\end{algorithm}
\clearpage
\section{Challenges in Stage-Aligned Reward Learning}
\label{app:challenge}

\begin{figure}[ht]
    \centering
    \includegraphics[width=0.95\linewidth]{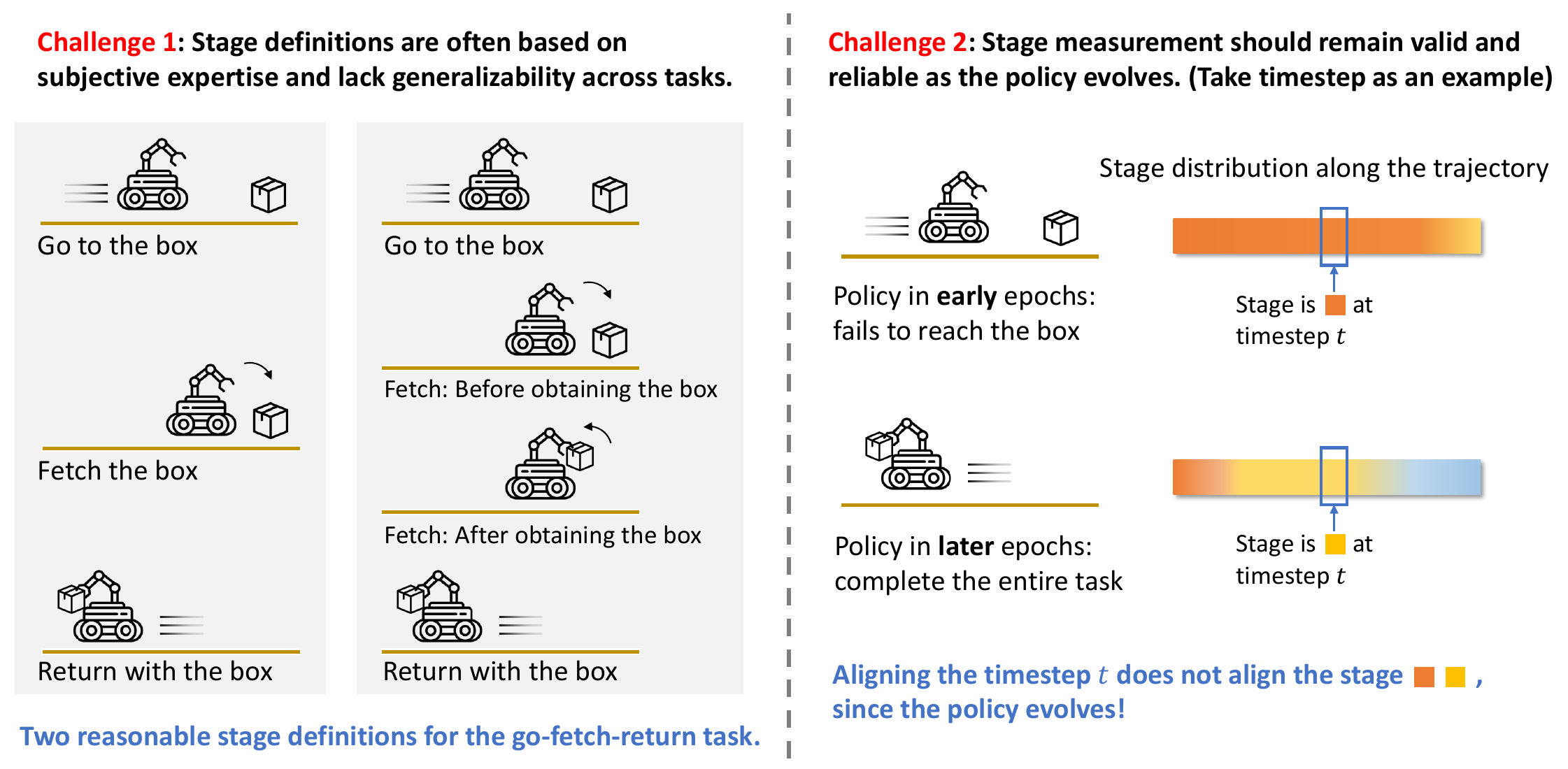}
    \caption{Illustrations of the two challenges that emerged in the design of the stage-aligned reward learning method.}
    \label{fig:challenge}
\end{figure}

\paragraph{Stage definitions are often based on subjective expertise and lack generalizability across tasks.}
The definition of the stage often requires expert insights and subjective judgment, and can vary significantly across domains and individuals.
For instance, in the go-fetch-return example shown in Figure \ref{fig:framework}, one might opt to further subdivide the retrieval stage into two parts: before and after obtaining the box.
Therefore, there is a need for an alternative framework that approximates stages without requiring explicit stage definitions.
Meta-learning approaches learning task embeddings for segments \cite{Zintgraf2020VariBAD} or clustering methods \cite{zhong2024no}, could potentially address this challenge. However, these methods are typically complex and demand large amounts of data for effective learning, which is unavailable in PbRL.

\paragraph{Stage measurement should remain valid and reliable as the policy evolves.}
Take an intuitive approximation, timestep, as an example. For a fixed policy, the timestep serves as a continuous approximation of the stage, as states that are close together on a trajectory tend to belong to the same stage. 
However, the timestep relies on the sampling policy, which cannot be relabeled once collected. 
As the policy evolves during the training process and given that state-of-the-art PbRL methods are off-policy for query reuse, segments collected in the early training phases may confuse the query selection in later phases.
Therefore, it is essential to develop a stage approximation that is tied to a specific policy and can be easily relabeled according to the trained policy.

\clearpage
\section{Experimental Details}
\label{app:experiment_detail}

\subsection{Analysis of the Impact of Stage Misalignment}
\label{app:toy}

This section provides the setups and implementation details of experiments in Section \ref{sec:toy}. 

\paragraph{Details on the classification experiment in Section \ref{sec:toy-metaworld}.}

We conduct the classification experiment in Section \ref{sec:toy-metaworld} on Metaworld door-open, drawer-open, and window-open tasks.
For each task, we first train a policy using SAC \cite{sac} with the ground truth task reward as the expert policy, and then collect 1e6 transitions $(s,r,t)$ with the trained policy, where $t$ is the timestep.
The SAC is implemented with the code provided by BPref \cite{lee2021bpref}, and is consistent with the SAC used in Section \ref{sec:exp}.
The detailed hyperparameters of SAC are shown in Table \ref{table:hyperparameters_sac}. 

Then, to verify the multi-stage property, we train a classifier which outputs the estimated timestep $\hat{t}$ of the input state $s$. 
Among the collected $1\times 10^6$ transitions, we randomly select $5\times 10^5$ transitions for training the classifier, and report the result that is evaluated on the other $5\times 10^5$ transitions.
The detailed hyperparameters of the classifier are shown in Table \ref{table:hyperparameters_classifier}. 

\begin{table}[ht]
\centering
\caption{Hyperparameters of the timestep classifier.}
\label{table:hyperparameters_classifier}
\begin{tabular}{ll}
\toprule
\textbf{Hyperparameter}~~~~ & \textbf{Value}~~~~~~~~~~~~~~~~~~ \\
\midrule
Size of hidden layers & 256 \\
Number of hidden layers & 3\\
Train epochs & 20 \\
Batch size & 256 \\
Learning rate & $1\times 10^{-4}$ \\
Optimizer & Adam \\
\bottomrule
\end{tabular}
\end{table}

\paragraph{Details on the human segment preferences experiment in Section \ref{sec:toy-stagemdp}.}
To show the existence of the stage reward bias, where the reward here refers to the underlying reward of humans, we sample segments of length 20 from the SAC trajectories, compose them into queries randomly, and then let humans label preferences.
Details about the human experiments are shown in Appendix \ref{app:human}.

\paragraph{Details on the abstract MDP experiment in Section \ref{sec:toy-stagemdp}.}
To show the benefits of stage-aligned query selection in multi-stage problems, we 
instantiate the abstract MDP introduced in Section \ref{sec:toy-stagemdp} with $\stages=101, \stageai=5, i\in\{1, \cdots, 100, \text{T}\}$, as shown in Figure \ref{fig:toy-model-stage}. 
The reward function is $\bar{r}(\omega,\upsilon)=\bar{r}_\text{sa}(\omega,\upsilon)+\bar{r}_\text{stage}(\omega)$, where $\bar{r}_\text{stage}(\omega)$ denotes the stage reward bias. 
$\bar{r}_\text{stage}(\omega_i) \sim \text{Uniform}[0, R_\text{bias}], \bar{r}_\text{sa}(\omega_i,\upsilon_j) \sim \text{Uniform}[0,10]$.
$R_\text{bias}$ indicates the strength of the stage reward bias.
Additionally, we normalize the reward to ensure a fixed scale across all $R_\text{bias}$ values, eliminating the need to tune hyperparameters for policy training.
Specifically, we normalize the reward function as follows:
\begin{equation}
    \bar{r}'(\omega,\upsilon)=\frac{\bar{r}(\omega,\upsilon)-\frac{1}{\stages}\sum_{\omega'}\min_{\upsilon'}\bar{r}(\omega',\upsilon')}{\frac{1}{\stages}\sum_{\omega'}\max_{\upsilon'}\bar{r}(\omega',\upsilon')-\frac{1}{\stages}\sum_{\omega'}\min_{\upsilon'}\bar{r}(\omega',\upsilon')},
\end{equation}
i.e., normalize the reward such that the performance of an arbitrary policy is in $[0, \stages]$.
We assume that human preference could be perfectly modeled by this underlying reward function.

In this problem, we use a tabular reward function $\bar{r}_\psi(s,a)$ parameterized by a matrix $\psi$ of shape $\stages\times\stagea$.
The conventional PbRL and the stage-aligned PbRL are implemented as described in Algorithm \ref{alg:toy}, which details the training process of this reward model based on the Bradley-Terry model. 
Specifically, lines 3$\sim$4 describe the query collection process of the two methods: conventional sampling uniformly samples state-action pairs from the entire state space, while stage-aligned sampling ensures that both state-action pairs come from the same stage. Then, in line 5, the reward model is trained by optimizing the cross-entropy loss.

\begin{algorithm}[H]
\caption{Tabular PbRL Algorithms for the Abstract MDP}
\label{alg:toy}
\begin{algorithmic}[1]
\REQUIRE Number of queries in one epoch $M$.
\STATE Initialize tabular reward function $\bar{r}_\psi(s,a)$ by setting $\psi$ to a zero matrix.
\FOR{each epoch}
    \STATE \textit{(For conventional PbRL)} Randomly sample $2M$ state-action pairs $\{((s_{i},a_{i})\}_{i=1}^{M}$, composing $M$ queries $\mathcal{D}=\{((s_{2i-1},a_{2i-1}),(s_{2i},a_{2i}))\}_{i=1}^{M}$.
    \STATE \textit{(For stage-aligned PbRL)} Randomly sample $M$ states $\{s_i\}_{i=1}^M$ and $2M$ actions $\{a_i\}_{i=1}^{2M}$, comprising $M$ queries $\mathcal{D}=\{((s_{i},a_{2i-1}),(s_{i},a_{2i}))\}_{i=1}^{M}$.
    \STATE Update $\psi$ by conducting gradient descent with loss function \eqref{eq:CE_loss} with $\mathcal{D}$.
\ENDFOR
\end{algorithmic}
\end{algorithm}

Hyperparameters of the tabular PbRL algorithms are shown in Table \ref{table:hyperparameters_toymdp}.

\begin{table}[ht]
\centering
\caption{Hyperparameters for policy training in the abstract MDP.}
\label{table:hyperparameters_toymdp}
\begin{tabular}{ll}
\toprule
\textbf{Hyperparameter}~~~~ & \textbf{Value}~~~~~~~~~~~~~~~~~~ \\
\midrule
Train epochs & 100 \\
Number of queries in one epoch & 200 \\
Learning rate & $0.05$ \\
Optimizer & Adam \\
\bottomrule
\end{tabular}
\end{table}

\subsection{Tasks}
\label{app:tasks}

The locomotion tasks from DMControl \cite{tassa2018dmcontrol} and robotic manipulation tasks from MetaWorld \cite{yu2020metaworld} used in our experiments are shown in Figure \ref{fig:env-examples}.

\begin{figure}[ht]
\centering
    \includegraphics[width=1.0\linewidth]{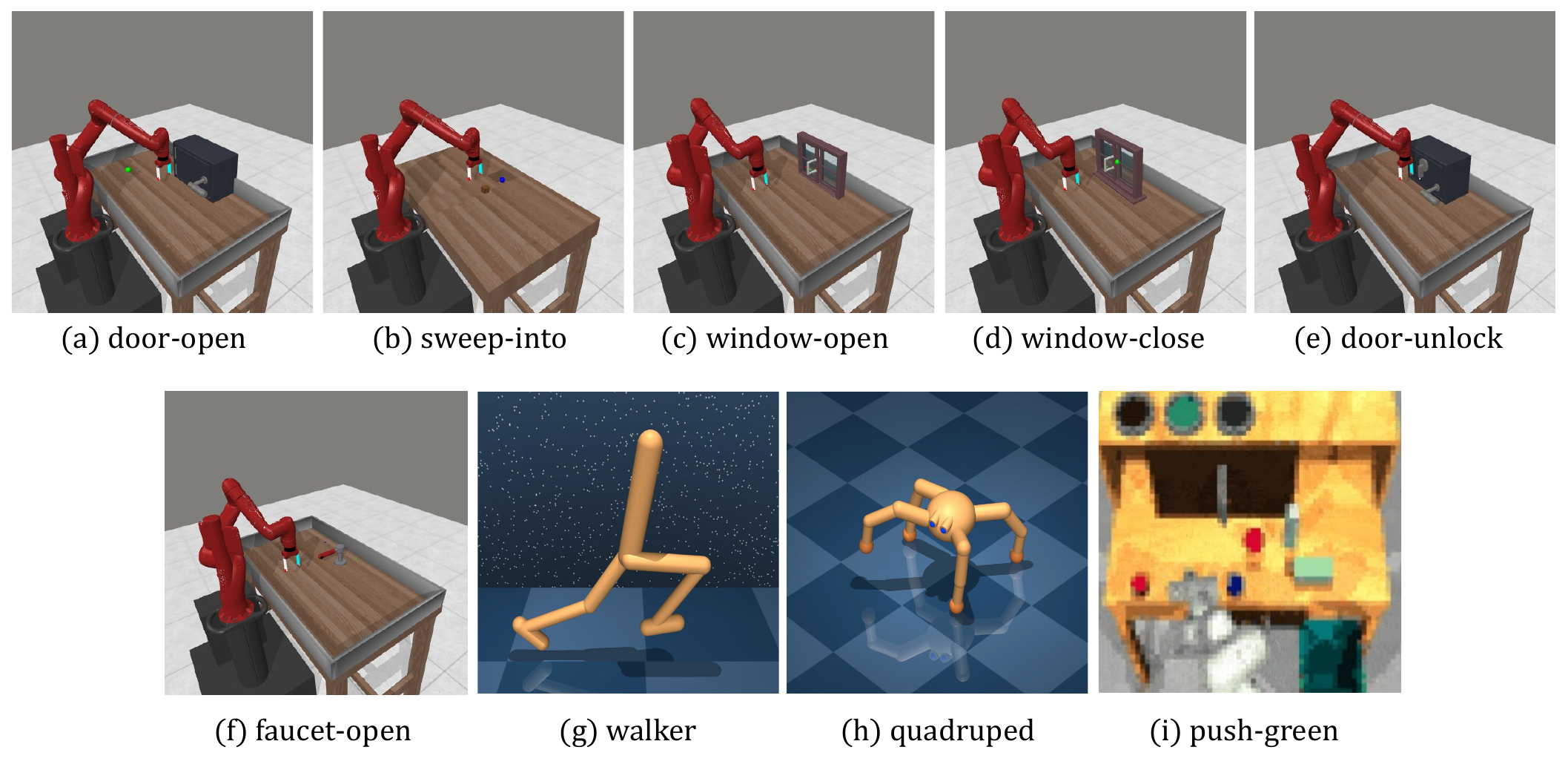}
    \caption{Rendered images of tasks from MetaWorld (a-f), DMControl (g-h), and RoboDesk (i).
    }
\label{fig:env-examples}
\end{figure}

\paragraph{Robotic manipulation tasks in MetaWorld.}
MetaWorld \cite{yu2020metaworld} provides diverse high-dimensional robotic manipulation tasks. We choose the following complex tasks in this work.
\begin{itemize} 
    \item Door-open: An agent controls a robotic arm to open a door with a revolving joint. The door is placed at a random position.
    \item Sweep-into: An agent controls a robotic arm to sweep a block into the hole. The block is at a random position. The hole is placed at a fixed location. 
    \item Window-open: An agent controls a robotic arm to open a window. The window is placed at a random position.
    \item Window-close: An agent controls a robotic arm to close a window. The window is placed at a random position.
    \item Door-unlock: An agent controls a robotic arm to unlock a door by rotating the lock counter-clockwise. The door is placed at a random position.
    \item Faucet-open: An agent controls a robotic arm to open a faucet by rotating the faucet counter-clockwise. The faucet is placed at a random position.
\end{itemize}
Please refer to \cite{yu2020metaworld} for detailed descriptions of the state space, action space, and the ground truth rewards.

\paragraph{Locomotion tasks in DMControl suite.}
DMControl suite \cite{tassa2018dmcontrol} provides diverse high-dimensional locomotion tasks. We choose the following complex tasks in this work.
\begin{itemize}
    \item Walker-run: A planar walker is trained to control its body and walk on the ground. 
    \item Quadruped-walk: A four-legged ant is trained to control its body and limbs, and crawl slowly on the ground.
    \item Quadruped-run: A four-legged ant is trained to control its body and limbs, and crawl fast on the ground.
\end{itemize}
The ground truth reward incorporates components designed to promote forward velocity across all tasks. 
Additionally, for tasks such as Walker-run, Quadruped-walk, and Cheetah-run, supplementary terms are included to encourage an upright torso posture.

\paragraph{Robotic manipulation tasks in RoboDesk.}
RoboDesk \cite{kannan2021robodesk} provides diverse robotic manipulation tasks that test diverse behaviors within the same environment. We choose the following complex tasks in this work.
\begin{itemize}
    \item Push-green: A robotic arm is trained to push the green button to turn the green light on.
\end{itemize}
We modify the state representation from the original implementation \cite{kannan2021robodesk}, while keeping the action and ground truth rewards unchanged. 
Specifically, the state is represented as a 68-dimensional vector that combines current and previous timestep information, including the end-effector coordinates (3 dimensions), the wrist joint angle (1 dimension), the gripper finger position (1 dimension), and the positions of all objects in the scenario (29 dimensions).

\subsection{Implementation Details}
\label{app:implement}

The implementation of \method is based on BPref \cite{lee2021bpref}.
Code is available at: \begin{center}
{{
\url{https://github.com/iiiiii11/STAIR}
}}
\end{center}

For the implementation of SAC \cite{sac}, PEBBLE \cite{lee2021pebble}, RIME \cite{cheng2024rime}, RUNE \cite{liang2022reward}, and QPA \cite{hu2023query}, we refer to their corresponding official repositories, as shown in Table \ref{table:source_code}.

\begin{table}[H]
\caption{Source codes of baselines.}
\centering
\begin{tabular}{lll}
\toprule
\textbf{Algorithm} & \textbf{Url} & \textbf{License} \\
\midrule
SAC, PEBBLE & \url{https://github.com/rll-research/BPref} & MIT\\
RUNE & \url{https://github.com/rll-research/rune} & MIT\\
MRN & \url{https://github.com/RyanLiu112/MRN} & MIT\\
RIME & \url{https://github.com/CJReinforce/RIME_ICML2024} & MIT\\
QPA & \url{https://github.com/huxiao09/QPA} & MIT\\
\bottomrule
\end{tabular}
\label{table:source_code}
\end{table}

SAC serves as a performance upper bound because it uses the ground-truth reward function, which is unavailable in PbRL settings for training. The detailed hyperparameters of SAC are shown in Table \ref{table:hyperparameters_sac}. 
PEBBLE's settings remain consistent with its original implementation, and the specifics are detailed in Table \ref{table:hyperparameters_pebble}. 
For RUNE, MRN, RIME, QPA and \method, most hyperparameters are the same as those of PEBBLE, and other hyperparameters are detailed in Table \ref{table:hyperparameters_rune}, \ref{table:hyperparameters_mrn}, \ref{table:hyperparameters_rime}, \ref{table:hyperparameters_qpa} and \ref{table:hyperparameters_ours}, respectively. 
For RIME, we employ an oracle script teacher in BPref by setting \verb|eps_skip=0|, replacing the noisy teacher, as non-ideal feedback falls outside the scope of this paper.
For QPA, we remove the data augmentation to ensure a fair comparison, as none of the other baselines incorporates this technique, and the authors of QPA have noted that data augmentation does not have a significant effect on performance in the MetaWorld environment in the official repository of QPA.\footnote{\url{https://github.com/huxiao09/QPA/issues/1}}
The total amount of feedback and feedback amount per session are detailed in Table \ref{table:hyperparameters_condition}.

The experiments are conducted on a server with Intel(R) Xeon(R) Platinum 8352V CPU, 512 GB RAM, NVIDIA RTX 4090 GPU, and Ubuntu 20.04 LTS.
For all baselines and our method, we run 5 different seeds, and report the mean performance and the standard deviation.

\begin{table}[ht]
\centering
\caption{Hyperparameters of SAC.}
\label{table:hyperparameters_sac}
\begin{tabular}{ll}
\toprule
\textbf{Hyperparameter}~~~~~~~~~~~~~~~~~~~~ & \textbf{Value}~~~~~~~~~~~~~~~~~~~~ \\
\midrule
Number of layers            & $2$ (DMControl), $3$ (MetaWorld) \\
Hidden units per layer      & $1024$ (DMControl), $256$ (MetaWorld) \\
Activation function         & ReLU \\
Optimizer                   & Adam \\ 
Learning rate               & $0.0005$ (DMControl), $0.0001$ (MetaWorld) \\
Initial temperature         & $0.2$ \\
Critic target update freq   & $2$ \\
Critic EMA $\tau$           & $0.01$ \\ 
Batch Size                  & $1024$ (DMControl), $512$ (MetaWorld) \\
$(\beta_1,\beta_2)$         & $(0.9, 0.999)$ \\
Discount $\gamma$           & $0.99$ \\
\bottomrule
\end{tabular}
\end{table}

\begin{table}[ht]
\centering
\caption{Hyperparameters of PEBBLE.}
\label{table:hyperparameters_pebble}
\begin{tabular}{ll}
\toprule
\textbf{Hyperparameter} & \textbf{Value}~~~~~~~~~~~~~~~~~~~~~~~~~~~~~~~~~ \\
\midrule
Segment length                  & $50$ \\
Learning rate                   & $0.0005$ (DMControl), $0.0001$ (MetaWorld) \\
Feedback frequency              & $20000$ (DMControl), $5000$ (MetaWorld) \\
Num of reward ensembles         & $3$ \\
Reward model activator          & tanh \\
Unsupervised pretraining steps  & $9000$ \\
\bottomrule
\end{tabular}
\end{table}

\begin{table}[ht]
\centering
\caption{Hyperparameters of RUNE.}
\label{table:hyperparameters_rune}
\begin{tabular}{ll}
\toprule
\textbf{Hyperparameter}~~~~~~~~~~~~~~~~~~~~~~~~~~~~~~~~~~~~~~~~~ & \textbf{Value}~~~~~~~~~~~~~~~~~~~~~~~~~ \\
\midrule
Initial weight of intrinsic reward $\beta_0$    & $0.05$ \\
Decay rate $\rho$    & $0.001$  \\ 
\bottomrule
\end{tabular}
\end{table}

\begin{table}[ht]
\caption{Hyperparameters of MRN.}
\label{table:hyperparameters_mrn}
\centering
\begin{tabular}{ll}
\toprule
\textbf{Hyperparameter} & \textbf{Value} \\
\midrule
Bi-level updating frequency $N$ & $5000$ (Cheetah, Hammer, Button Press), $1000$ (Walker) \\
 & $3000$ (Quadruped), $10000$ (Sweep Into) \\
\bottomrule
\end{tabular}
\end{table}

\begin{table}[ht]
\centering
\caption{Hyperparameters of RIME.}
\label{table:hyperparameters_rime}
\begin{tabular}{ll}
\toprule
\textbf{Hyperparameter} & \textbf{Value} \\
\midrule
Coefficient $\alpha$ in the lower bound $\tau_\text{lower}$ & $0.5$ \\
Minimum weight $\beta_{\min}$ & 1 \\
Maximum weight $\beta_{\max}$ & 3 \\
Decay rate $k$ & 1/30 (DMControl), 1/300 (MetaWorld) \\
Upper bound $\tau_\text{upper}$ & $3\ln(10)$ \\
$\delta$ for the intrinsic reward & $1 \times 10^{-8}$ \\
Steps of unsupervised pre-training & $9000$ \\
\bottomrule
\end{tabular}
\end{table}

\begin{table}[ht]
\centering
\caption{Hyperparameters of QPA}
\label{table:hyperparameters_qpa}    
\begin{tabular}{ll}
\toprule
\textbf{Hyperparameter} & \textbf{Value} \\
\midrule
Learning rate                   & $0.0005$ (walker-run), \\
                                & $0.0001$ (quadruped-walk, quadruped-run, MetaWorld) \\
Size of policy-aligned buffer $N$ & 30 (door-unlock), 60 (door-open), \\ 
& 10 (Other tasks) \\
Data augmentation ratio $\tau$ & 20  \\
Hybrid experience replay sample ratio $\omega$ & 0.5 \\
Min/Max length of subsampled snippets & [35, 45] \\
\bottomrule
\end{tabular}
\end{table}

\begin{table}[ht]
\centering
\caption{Hyperparameters of \method.}
\label{table:hyperparameters_ours}
\begin{tabular}{ll}
\toprule
\textbf{Hyperparameter}~~~~ & \textbf{Value}~~~~~~~~~~~~~~~~~~ \\
\midrule
State coefficient of the quadrilateral distance $c_\text{state}$ & 0.1 \\
Stage coefficient of the quadrilateral distance $c_\text{stage}$ & 2 \\
The frequency of temporal distance update$K_\text{SD}$ & 1 \\
Learning rate of the temporal distance & $3\times 10^{-4}$\\
Number of layers of Temporal distance & 3 \\
Hidden units per layer & 256 \\
Energy function in Temporal distance & MRN-POT \cite{jiang2025episodic} \\
Contrastive loss function in Temporal distance & InfoNCE Symmetric \cite{jiang2025episodic} \\
\bottomrule
\end{tabular}
\end{table}

\begin{table}[ht]
\centering
\caption{Feedback amount in each environment. The ``value'' column refers to the feedback amount in total / per session.}
\label{table:hyperparameters_condition}
\begin{tabular}{ll|ll}
\toprule
\textbf{Environment}~~~~~~ & \textbf{Value}~~~~~~~~~~~~~~ & 
\textbf{Environment}~~~~~~ & \textbf{Value}~~~~~~~~~~~~~~  \\
\midrule
walker-run      & $1000/100$   & window-open   & $2000/50$ \\ 
quadruped-walk  & $2000/200$   & window-close  & $2000/50$ \\ 
quadruped-run   & $2000/200$   & door-unlock   & $5000/50$ \\ 
door-open       & $5000/50$    & faucet-open   & $3000/50$ \\ 
~~~~-           & $10000/50$   & push-green    & $1000/50$ \\ 
sweep-into      & $10000/50$   &               &           \\ 
~~~~-           & $20000/50$   &               &           \\ 
\bottomrule
\end{tabular}
\end{table}

\FloatBarrier

\section{Human Experiments}
\label{app:human}

\subsection{Preference collection}
In human experiments, we collect feedback from human labelers (the authors) familiar with the tasks. 

\paragraph{Task 1. } 
For the human experiment in Section \ref{sec:toy-metaworld}, the labelers provide 20 pieces of feedback to each task.
Since the task is completed in 100 steps, we only sample segments that begin before the 100th step.
Each segment is about 1 second long, which has 20 timesteps.
The labelers are instructed to watch a video rendering each segment and determine which one performs better in achieving the specified objective.
For each query, the labelers are presented with three options: (1) $\sigma_0$ is better, (2) $\sigma_1$ is better, and (3) the two segments are indistinguishable.

\paragraph{Task 2. } 
For the human experiment in Section \ref{subsec:human}, the labelers provide 50 pieces of feedback to each task.
Each segment is about 2 seconds long, which has 50 timesteps.
To ensure fairness, we shuffle the queries generated by the algorithms so that the labelers do not know the algorithm that generates the queries.
The labelers are instructed to watch a video rendering each segment and determine whether the two segments are in the same stage of this task from their perspective.
For each query, the labelers are presented with two options: (1) the two segments are in the same stage, (2) the two segments are in different stages.

\subsection{Guidance to human labelers}

Below, we provide the instructions we provided to the human labelers.
The instructions are inspired by \cite{christiano2017deep, kim2023preference, sepoa}.

\paragraph{Door-open.}
The target behavior is that 
the robot arm smoothly rotates the door until it stays fully open at a clearly visible angle.
(For task 1 only) If the arm moves abnormally, lower your priority for the segment.

\paragraph{Drawer-open.}
The target behavior is that 
the drawer is fully extended to its final position with controlled, direct pushing.
(For task 1 only) If the arm moves abnormally, lower your priority for the segment.

\paragraph{Window-open.}
The target behavior is that 
the window slides horizontally to a clearly open position with coordinated gripper guidance.
(For task 1 only) If the arm moves abnormally, lower your priority for the segment.

\paragraph{Window-close.}
The target behavior is that 
the window slides horizontally to a clearly closed position with coordinated gripper guidance.
(For task 1 only) If the arm moves abnormally, lower your priority for the segment.
\clearpage
\section{More Experimental Results}
\label{app:extra-exp}

\subsection{Multi-Stage Property and Stage Misalignment}
\label{app:extra-exp-toy}

In this section, we provide additional results for Section \ref{sec:toy}.
Figure \ref{fig:extra-model-metaworld} provides additional human experiments validating the existence of stage reward bias in MetaWorld domains, which serves as an example for practical tasks.
Figure \ref{fig:extra-toy-model} provides the episode reward and reward estimation error of different $R_\text{bias}$ in the abstract MDP model.

\begin{figure}[ht]
    \centering
    \includegraphics[width=0.8\linewidth]{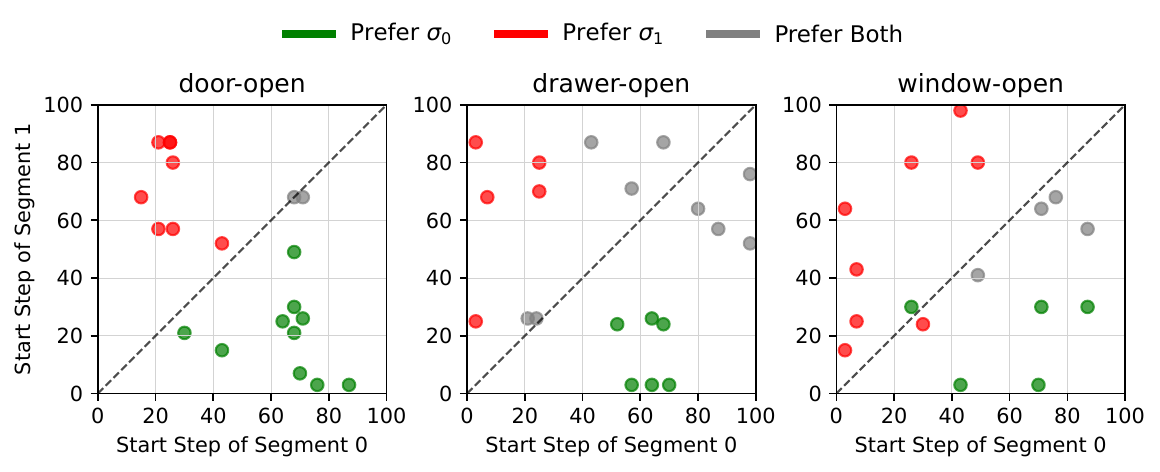}
    \caption{Human Preferences of segments started at different timesteps in the MetaWorld. 
    Each point $(t_x,t_y)$ represents that segment 0 and segment 1 are collected from steps $t_x$ and $t_y$ respectively.
    Humans prefer segments in later timesteps, 
    suggesting a stage reward bias where the humans' underlying reward is higher in these later stages.}
    \label{fig:extra-model-metaworld}
\end{figure}

\begin{figure}[ht]
    \centering
    \includegraphics[width=1.0\linewidth]{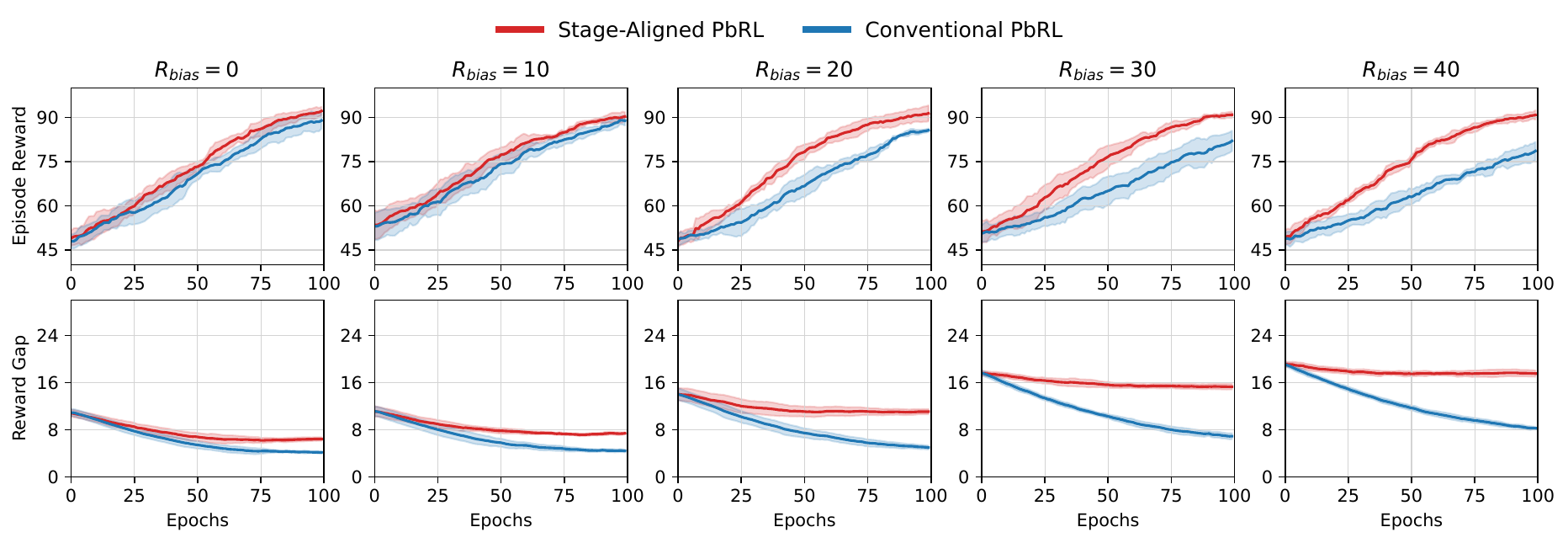}
    \caption{Episode reward (upper row) and reward estimation error (lower row) of different $R_\text{bias}$ in the abstract MDP model. The solid line and the shaded area represent the mean and the standard deviation of the corresponding value.}
    \label{fig:extra-toy-model}
\end{figure}

\subsection{Query Visualization}
\label{app:extra-exp-query-visualize}

In Figure \ref{fig:extra-query-demo}, we visualize the queries selected by PEBBLE and \method, respectively.
The segment pair $(\sigma_0, \sigma_1)$ selected by PEBBLE is in different stages: the window is already closed in $\sigma_0$, while the arm is in the process of closing the window in $\sigma_1$. Comparing these two segments does not provide the arm with sufficient information to learn how to perform the task of closing the window.
In contrast, for the segment pair $(\sigma_0', \sigma_1')$ selected by \method, the arm is closing the window in two different ways, and the two segments are within the same stage. This comparison directly provides the arm with actionable information on the mechanics of closing the window, making it more beneficial for learning.

\begin{figure}[t!]
    \centering
    \includegraphics[width=0.9\linewidth]{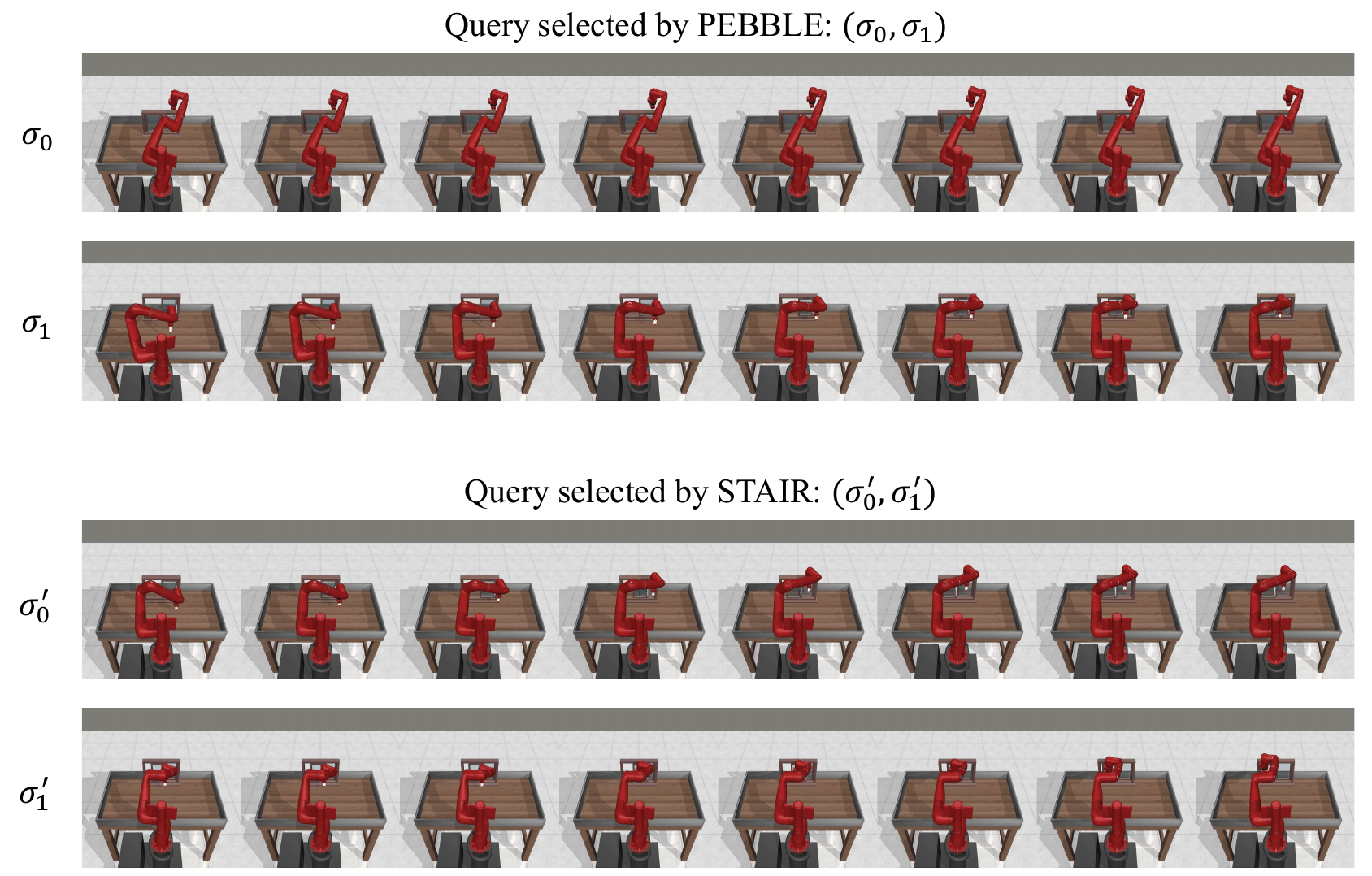}
    \caption{Visualization of segment pairs selected by (a) PEBBLE (disagreement-based query selection) and (b) \method (stage-aligned query selection), under the window-open task.
    }
    \label{fig:extra-query-demo}
\end{figure}

\subsection{Performance on noisy feedback}

To evaluate \method's robustness to noisy feedback, which mimics imperfect and inconsistent human feedback, we conduct experiments considering two types of ``scripted teachers'', following prior work \cite{lee2021bpref}.
Specifically, we consider two types of ``scripted teachers'':
(1) Error teacher: A teacher with a random error rate $\epsilon=0.1$, resulting in 10\% incorrect feedback.
(2) Inconsistent teacher: Feedback is randomly sampled from a mixture of two sources: a myopic teacher with discounted factor $\gamma=0.9$, and an error teacher with $\epsilon=0.2$.

We show the results on door-open ($N_\text{total}=5000$) and sweep-into ($N_\text{total}=10000$) in Table \ref{tab:noise}.
As shown in the table, \method consistently outperforms baselines under both conditions, highlighting its robustness to non-ideal feedback.

\begin{table}[ht]
\centering
\caption{Performance on noisy feedback.}
\label{tab:noise}
\begin{tabular}{lrrrr}
\toprule
& \multicolumn{2}{c}{\textbf{Door-open}}        & \multicolumn{2}{c}{\textbf{Sweep-into}}          \\
\midrule
Teacher   
& \multicolumn{1}{c}{Error} & \multicolumn{1}{c}{Inconsistent} 
& \multicolumn{1}{c}{Error} & \multicolumn{1}{c}{Inconsistent} \\
\midrule
\method     
& 99.89 {\tiny $\pm$ ~ 0.09}  & 98.53 {\tiny $\pm$ ~ 1.36}  
& 49.12 {\tiny $\pm$ 14.71}  & 56.67 {\tiny $\pm$ 11.18}
\\
PEBBLE
& 91.41 {\tiny $\pm$ ~ 6.61}  & 88.83 {\tiny $\pm$ ~ 7.29}  
& 29.64 {\tiny $\pm$ 11.66}  & 29.86 {\tiny $\pm$ 14.41}    
\\
RUNE
& 63.15 {\tiny $\pm$ 13.50}  & 75.97 {\tiny $\pm$ 13.28}    
& 11.82 {\tiny $\pm$ ~ 5.88}  & 10.66 {\tiny $\pm$ ~ 8.76} 
\\      
\bottomrule
\end{tabular}
\end{table}
\clearpage
\section{Further Discussions} \label{app:discuss}

\paragraph{Explanation for performance in single-stage tasks.}

In single-stage tasks, the performance of STAIR primarily comes from the induced implicit curriculum learning mechanism, where the method adaptively adjusts the learning focus based on the evolving policy.

To explain how the curriculum learning works, we use the Quadruped task as an example. In the quadruped task, early training with STAIR might prioritize selecting segments before and after a fall (which has a small temporal distance), helping the agent learn stability. As training progresses and the policy improves (with the quadruped becoming more stable), the temporal distance between such segments (before and after a fall) increases. At this point, STAIR shifts its focus to segments where the quadruped shows different movement behaviors, rather than emphasizing stability-related segments. This gradual shift enables the agent to learn better movement behaviors while avoiding excessive focus on already-learned behaviors like maintaining stability.

This induced automatic curriculum learning mechanism implicitly divides the reward learning process into stages by introducing queries progressively. In this way, later learning stages (e.g., learning how to walk faster) are presented only after the agent masters the earlier ones (e.g., ensuring stability), enabling the model to focus on the complexities of the newly added stages. Recent works have demonstrated the effectiveness of automatic curriculum learning, which guides the agent with tasks that align with its current capabilities \cite{florensa2018automatic, racaniere2020automated}.

\end{document}